%% file: main.tex
\pdfoutput=1
\documentclass[11pt]{article}
\usepackage[font=libertinus, citestyle=authoryear]{kurbanlab}
\input{affiliations}

\newcolumntype{Y}[1]{>{\centering\arraybackslash}p{#1}}
\usetikzlibrary{arrows.meta,positioning,fit,calc}
\algtext*{EndIf}\algtext*{EndFor}\algtext*{EndWhile}\algtext*{EndProcedure}\algtext*{EndFunction}
\algrenewcommand\algorithmicrequire{\textbf{Input:}}
\algrenewcommand\algorithmicensure{\textbf{Output:}}
\algrenewcommand\alglinenumber[1]{\scriptsize #1}
\algrenewcommand{\algorithmiccomment}[1]{\hfill{\color{gray}$\triangleright$~#1}}
\DeclareMathSizes{7.4}{7.4}{6}{5}
\definecolor{cInk}{RGB}{28,28,30}
\definecolor{cCert}{RGB}{13,102,143}
\definecolor{cCertF}{RGB}{228,240,247}
\definecolor{cPend}{RGB}{112,112,116}
\definecolor{cPendF}{RGB}{243,243,244}
\definecolor{cExcl}{RGB}{176,84,12}
\definecolor{cExclF}{RGB}{252,240,229}
\definecolor{cRule}{RGB}{196,196,200}

\newlength{\tinset}
\newcommand{\thead}{\toprule}
\newcommand{\theadn}[2]{\toprule}
\newcommand{\theadend}{\midrule}
\newcommand{\tgroupsep}{\midrule}
\newcommand{\tgroupc}[5]{\multicolumn{#3}{@{}l@{}}{\emph{#4}\hspace{0.9em}\textcolor{cPend}{#5}}\\}
\newcommand{\tgroup}[3]{\tgroupc{}{}{#1}{#2}{#3}}
\newcommand{\tgroupcert}[3]{\tgroupc{}{}{#1}{#2}{#3}}
\newcommand{\tgroupexcl}[3]{\tgroupc{}{}{#1}{#2}{#3}}
\newcommand{\tunder}[1]{\midrule}
\newcommand{\tunderend}{}
\newcommand{\thairx}[1]{\noalign{\vskip#1}}
\newcommand{\thair}{\thairx{2pt}}
\newcommand{\tgreyrule}[1]{\noalign{\vskip#1{\color{cRule}\hrule height0.4pt}\vskip#1}}
\newcommand{\thd}[1]{\textbf{#1}}
\newcommand{\thdsub}[1]{\textcolor{cPend}{#1}}
\newcommand{\tmut}[1]{\textcolor{cPend}{#1}}
\newcommand{\tkey}[2]{\textcolor{#1}{#2}}
\newcommand{\tpartial}[1]{#1}

\newcommand{\bx}{\mathbf{x}}
\newcommand{\bz}{\mathbf{z}}
\newcommand{\bs}{\mathbf{s}}
\newcommand{\bd}{\mathbf{D}}
\newcommand{\Ac}{\mathcal{A}}
\newcommand{\Ex}{\mathbb{E}}
\newcommand{\Lc}{\mathcal{L}}
\newcommand{\TV}{\mathrm{TV}}
\newcommand{\IAS}{\textsc{ias}}
\newcommand{\AF}{\textsc{AliasForge}}
\newcommand{\ROUTE}{\textsc{ias-Route}}

\title{Certified Interface Aliases:\texorpdfstring{\\}{ }Exact Collisions in Vision--Language\texorpdfstring{\\}{ }Preprocessing, and When They Exist}
\RunningTitle{Certified interface aliases}

\Author{Mert Onur Cakiroglu}{iub}
\Author{Elham Buxton}{uis}
\Author{Mehmet Dalkilic}{iub}
\Author[corresponding=hkurban@hbku.edu.qa, orcid=0000-0003-3142-2866]{Hasan Kurban}{hbku}

\Keywords{vision--language models; image preprocessing; exact collisions; integer lattices; verifiers; test-time compute routing}
\CodeURL{https://github.com/KurbanIntelligenceLab/aliasforge}
\Venue{Under review}

\begin{document}
\maketitle

\begin{abstract} 
Vision--language verifiers and routers must distinguish errors repairable by more reasoning from those caused by visual evidence never reaching the language model. This distinction lacks ground truth because annotators see full-resolution images while models receive preprocessed tensors. We introduce \AF{} to create cases where the relevant fact is provably absent from the interface. Fixed-point resampling makes the pre-rounding resize an exact integer linear map that can send nonzero integer perturbations to zero. Hiding a label-flipping perturbation there produces images with opposite step-correctness labels but bit-identical interface states. Every verifier therefore has the same output law on both members, giving pair-balanced accuracy exactly one half and zero gain from language-side repair. We prove that every fixed-point downscaler has such null vectors and bound their smallest size at the most common ratios, which rules out an 8-bit fit whenever the bound exceeds $255$. From the resize configuration alone, a lattice criterion supplies realizable collisions and certifies their absence within the specified construction family. It resolves all $20$ screened configurations, $17$ as constructible and $3$ as non-constructible. We construct certified pairs across three architectures and certify four additional processors, with zero decision-logit gap on all $18$ scored pairs and none of the $18$ controls. The pairs also screen routers that waste computation on re-attention or further reasoning. On natural items, per-item routing headroom exists, but no tested interface-only router improves over stopping. Our fiber ceiling bounds the headroom recoverable from the interface.
Code: \url{https://github.com/KurbanIntelligenceLab/aliasforge}.
\end{abstract}

\printkeywords

\section{Introduction}\label{sec:intro}

When a vision--language model fails on a visual task, diagnosing the error requires knowing whether the visual evidence ever reached the language model. Nothing the model emits reveals this. Yet practitioners are forced to act without this knowledge. They train perception-aware reward models \citep{perceval,pearl,gpro}, decide whether a downsampled view suffices \citep{visionthink}, and route test-time compute accordingly \citep{damani,avis}. None of these procedures can verify if the visual evidence survived preprocessing, and guessing incorrectly is not free. Consider when a verifier model, tasked with judging a reasoning step against the image, is instructed to re-examine an input whose decisive detail never survived preprocessing. In this scenario, the system wastes compute on a channel that carries no useful signal. Conversely, re-encoding at a higher resolution when the detail was already present wastes compute on an unnecessary intervention. Similarly, a reward model trained on perception labels assigned by a judge model simply inherits the judge's own perceptual blind spots. Ultimately, these mechanisms are making decisions based on missing information. The resulting errors compound quietly because, from the outside, a model that could not see the evidence behaves exactly like one that did not use it \citep{blindtest,compensate}.

This gap exists because human evaluators establish ground truth from the full-resolution image, whereas the model receives only a tensor. Two inputs whose tensors differ at all, however slightly, can be told apart by some verifier (Proposition~\ref{prop:sharp}), so no threshold on tensor similarity can certify that the visual evidence failed to reach the model. Our certificate comes from one observation. An integer resize has an integer null space, and we hide a label-flipping perturbation in it.

\begin{figure}[t] \centering
\includegraphics[width=0.95\textwidth,trim=0 3pt 0 10pt,clip]{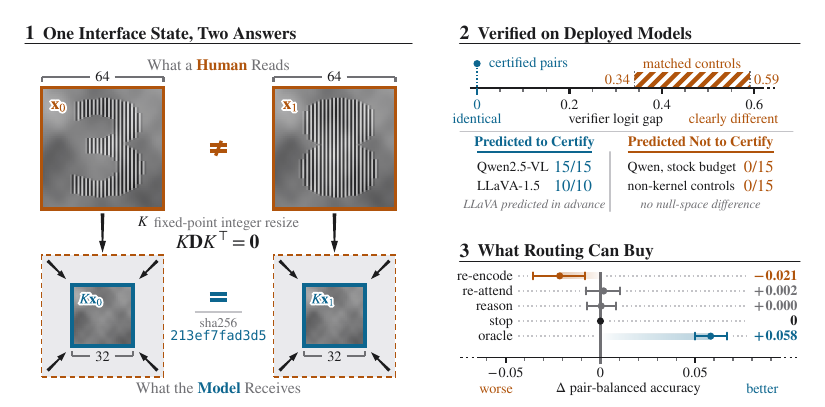} \caption{\textbf{Certified interface aliases and routing.} (1)~Images with different correct answers reach the model as bit-identical tensors because the deployed resize $K$ maps their difference $\bd$ to zero (Theorem~\ref{thm:cert}). (2)~The criterion predicts which deployed pipelines admit certified pairs, while controls outside the resize null space and identity-resize controls do not certify (Proposition~\ref{prop:construct}). (3)~No tested action improves on stopping on natural items despite per-item oracle headroom; error bars are $95\%$ intervals clustered by record, the source problem.}
\label{fig:teaser}
\end{figure}

In most deployed preprocessing pipelines the input image is resized by a resampler to the size the model expects. A deployed resampler's filter weights are fixed-point integers, so its downscaling resize is an exact linear map over the integers, and some integer differences between images map to zero. We call two such images a \emph{certified pair}\footnote{This notion differs from certified adversarial robustness, which proves that predictions remain fixed under bounded perturbations. Our certificate proves that inputs with different correct answers reach the model identically.} when they have opposite ground-truth labels under one shared question. Because their difference maps to zero, they reach the model as bit-identical tensors, so it provably cannot tell them apart (Figure~\ref{fig:teaser}). Our constructor \AF{} hides a label-flipping perturbation in one of those differences (Section~\ref{sec:af}). On such a pair every failure is a perception failure, and nothing the model does can help unless it changes the image. From the outside, chance accuracy on a certified pair looks the same as chance accuracy on two images that are merely similar. Only on a certified pair is that chance provable rather than observed. Constructed pairs test verifiers and routers on known perceptual failures, and they do not by themselves diagnose a natural error. The mechanism descends from image-scaling attacks \citep{scalingattack,advpreproc}, repurposed from evasion to measurement.

We build \AF{}, the constructor that produces certified pairs, validate it on three architectures, and audit $20$ declared configurations with its criterion. We then ask the question that motivated the work: \emph{Can a score read from the preprocessed tensor alone decide when an input deserves more compute?} No score we test can, and the fiber ceiling (Proposition~\ref{prop:fiber}) bounds what any such score could recover. Three contributions follow.

\begin{itemize}[leftmargin=1.4em,itemsep=1pt,topsep=2pt,parsep=0pt]
\item \textbf{A certified measurement instrument.} We propose and implement \AF{}, which constructs certified pairs whose members have opposite correct answers but bit-identical preprocessed states. We prove that no verifier or language-side repair can distinguish their members (Theorem~\ref{thm:cert}), and empirically validate the construction across three architectures and four additional processors (Tables~\ref{tab:e0} and~\ref{tab:breadth}).
\item \textbf{A constructibility criterion from preprocessing alone.} We derive a criterion that uses only the deployed resize operation to test whether our construction can produce a certified pair, without model weights or queries (Theorem~\ref{thm:poly}, Proposition~\ref{prop:construct}). Across all $20$ declared configurations, it either produces a pair or certifies that our construction cannot produce one. It also correctly predicts constructibility on a second architecture before any pairs are generated.
\item \textbf{A screen and ceiling for interface-only routing.} We use certified pairs to test whether a router spends compute on re-attention or further reasoning when those actions provably cannot help (Corollary~\ref{cor:screen}). We also derive the fiber ceiling, which bounds the gain of any router that selects actions using only the preprocessed image (Proposition~\ref{prop:fiber}). On natural benchmark items, the best action varies across items, but none of the interface-only routers we test improves on always stopping (Tables~\ref{tab:main} and~\ref{tab:routers}).
\end{itemize}

\section{Related work}

\label{sec:related}

\paragraph{Approximate and exact collisions} Prior work identifies visually distinct images with similar internal representations. The Multimodal Visual Patterns (MMVP) benchmark selects ``CLIP-blind'' image pairs by thresholding embedding similarity \citep{mmvp}. Other work uses optimization to make images approach a target embedding \citep{equivstructures,unaligning}, while connector reconstruction loss has been used to predict failures in grounded question answering \citep{lostinembeddings}. These approaches identify approximate similarity or predict failure, but do not certify exact equality of the deployed interface (Proposition~\ref{prop:sharp}). Text preprocessing shows the opposite behavior. Visually identical Unicode strings can produce different token sequences \citep{badchars}, while our visually different images produce the same preprocessed representation. The closest constructive precedent is image-scaling attacks, which create inputs whose downscaled content differs from their full-resolution appearance \citep{scalingattack}, a behavior arising from the interaction between downsampling and convolution \citep{advpreproc}. These attacks have been demonstrated against deployed multimodal systems \citep{chameleon}, and the preprocessing configuration can be recovered through black-box queries \citep{patchsize}. Defenses developed for these attacks aim to prevent attacker-chosen content from appearing after downscaling \citep{advpreproc}, whereas our aliases (Definition~\ref{as:hash}) leave the downscaled output unchanged while altering the correct answer in the full-resolution image. Randomized resizing can move bicubic preprocessing away from an integer downscaling ratio \citep{randomization}, but changing the resampling kernel does not necessarily eliminate aliases. Area scaling has been reported to resist image-scaling attacks \citep{advpreproc}. However, at the integer resize ratios evaluated in this study, it is equivalent to Pillow's box filter, which admits exact collisions at every ratio we successfully reconstructed.

\paragraph{Representation probing} A representation probe trains a decoder to predict a visual fact from the model's internal representation. Successful decoding provides evidence that the fact is present, but failure does not prove absence because the decoder itself may be inadequate. Control tasks help detect probes that succeed by fitting arbitrary labels \citep{controltasks}. Recent theory characterizes information loss through inputs that share a representation \citep{bayessuff,fibercriterion}, but does not identify such collisions in a particular deployed system. Our certified pairs provide such a witness for a deployed interface.

\paragraph{Perception labels in verifiers and routers} Prior work trains verifiers to identify whether a reasoning error is perceptual \citep{perceval,pearl,vlprm,badseeing}, evaluates this ability at the reasoning-step level \citep{prmbench}, and uses similar signals to decide when additional computation is needed \citep{damani,failtosee,gpro,avis,imagine}. Among these methods, \citet{visionthink} is closest to our setting because it routes compute using an estimate of whether the relevant visual evidence is available. \citet{blindtest} study a complementary failure in which a visual fact is represented but not used, while other studies infer limited visual access from model behavior \citep{seeingnotbelieving,vlmneedwords,visualaccess}. Correctly identifying the source of an error matters because one audit assigns $86.9\%$ of a strong model's visual-mathematics errors to perception using a judge model that cannot verify what visual information reached the model \citep{seeing2thinking}. Our certified pairs establish a known case in which the distinguishing fact is absent from the interface and provide a one-forward-pass test of a router's decision without observing intervention outcomes (Corollary~\ref{cor:screen}). Prior work finds that longer reasoning can disperse visual attention \citep{deeperthought,ttsvlm}, prompted self-reflection can fail to revisit the image \citep{visualswap}, and chains of thought can propagate upstream perceptual errors \citep{disentangle}. Our routing evaluation complements these findings by keeping the verifier frozen, separating re-attention \citep{lookagain} from re-encoding \citep{sieve}, and measuring regret against a per-item oracle. 


\section{Setup}\label{sec:setup}

A verifier $f(\bx,Q,R)\in[0,1]$ assigns a probability that candidate reasoning step $R$ is correct for image $\bx$ and question $Q$. The image reaches the language model only through the \emph{visual interface} $\bz=\Phi(\bx)$, which contains every image-dependent quantity on which the language model conditions. Formally, the verifier is a Markov kernel from $(\bz,Q,R)$ to a verdict in $\{0,1\}$, and $f(\bx,Q,R)$ is the probability that it accepts $R$. This covers deterministic scoring and sampled decoding, and its randomness $\omega$ is drawn independently of $(\bx,\bz)$. The deployed processor computes this interface in stages, so $\Phi=\Phi_n\circ\cdots\circ\Phi_1$, and we write $\bs_k(\bx)=\Phi_k(\cdots\Phi_1(\bx))$ for the state after stage $k$, with $\bs_n(\bx)=\Phi(\bx)$. Write $y(\bx,Q,R)\in\{0,1\}$ for the ground-truth step-correctness label, determined at full resolution. A \emph{pair} is $(\bx_0,\bx_1,Q,R)$ with $y(\bx_0,Q,R)=1$ and $y(\bx_1,Q,R)=0$. For a pair $i=(\bx_{i,0},\bx_{i,1},Q_i,R_i)$ and member $b\in\{0,1\}$, write $p_{i,b}=f(\bx_{i,b},Q_i,R_i)$. The \emph{pair-balanced accuracy} is $\mathrm{BA}=\Ex_i[\tfrac12(p_{i,0}+1-p_{i,1})]$.

To study routing, we use the action set $\Ac=\{\textsf{stop},\textsf{att},\textsf{rea},\textsf{enc}\}$. Here, $\textsf{stop}$ accepts the current verdict, $\textsf{att}$ (re-attend) instructs the model to re-examine the region named by the question, $\textsf{rea}$ (reason) runs and aggregates further critique passes, and $\textsf{enc}$ (re-encode) recomputes $\bz$ from a higher-resolution crop (Appendix~\ref{app:actions}, Table~\ref{tab:actions}). A \emph{language-side} action is anything the system does once $\bz$ is fixed. Formally, it is a Markov kernel acting on $(\bz,Q,R)$, and we write $\Lc$ for the set of such actions. Thus $\{\textsf{stop},\textsf{att},\textsf{rea}\}\subset\Lc$, while $\textsf{enc}$ lies outside $\Lc$ because it recomputes $\bz$. The set $\Lc$ is closed under composition and finite adaptivity, so it also includes procedures outside $\Ac$, such as self-consistency over any number of critiques (Definition~\ref{def:ls}). We call the unit scored by the verifier an \emph{item}, either a pair or a single image with its question and candidate step. The verifier's \emph{correctness} on an item is the probability it assigns to the true label, $f$ when $y=1$ and $1-f$ when $y=0$, and on a pair it is the members' average $\tfrac12(p_{i,0}+1-p_{i,1})$. We write $\Delta_i(a)$ for the change in the verifier's correctness on item $i$ after action $a$. Each action has a measured cost $c(a)$ with $c(\textsf{stop})=0$, charged at a price $\lambda\ge0$, and its gain is $g_i(a)=\Delta_i(a)-\lambda\,c(a)$. A \emph{policy} $\pi$ chooses one action per item from the information available to it.

\section{Certified interface aliases}\label{sec:cert}

\begin{assumption}[Interface factorization]\label{as:iface}
Fix a pipeline stage $k$.
\begin{enumerate}[label=(\alph*),ref=\thetheorem(\alph*),leftmargin=1.6em,itemsep=0pt,topsep=1pt,parsep=0pt]
\item\label{as:factor} \emph{Factorization.} The conditional law of $\bz$ given $\bs_k(\bx)$ does not depend on $\bx$. No stage after $k$ reads the raw image or any earlier state except through $\bs_k$.
\item\label{as:shared} \emph{Shared non-image inputs.} The members of a pair use identical token sequences for $Q$ and $R$ and the same decoding parameters. Any sampling randomness is drawn independently of $(\bx,\bz)$.
\end{enumerate}
\end{assumption}

\begin{definition}[Certified interface alias]\label{as:hash}
A pair $(\bx_0,\bx_1,Q,R)$ is an \emph{interface alias at stage $k$} if its members produce exactly the same state at stage $k$, \mbox{$\bs_k(\bx_0)=\bs_k(\bx_1)$}. It is a \emph{certified interface alias}, or a \emph{certified pair} for short, once this equality is verified by comparing cryptographic digests of the full serialized state, including array contents, data type, shape and all non-array metadata.
\end{definition}

\begin{theorem}[Interface aliasing certificate]\label{thm:cert}
Under Assumption~\ref{as:iface}, the joint law of $(\bz,Q,R,\omega)$ is the same under both members of an interface alias at stage $k$. Every verifier therefore induces the same output law on both members, and every language-side action preserves that law. Consequently, (i) every decision rule based on that law has $\mathrm{BA}=1/2$ exactly; (ii) $\Delta_i(a)=0$ exactly for every $a\in\Lc$, including $\textsf{att}$, $\textsf{rea}$ with any number of passes, self-consistency over any number of critiques, and deterministic re-supply of the same image; and (iii) any action, language-side or not, that leaves $\bs_k$ shared and preserves Assumption~\ref{as:iface} has $\Delta_i(a)=0$. Therefore, $\Delta_i(a)>0$ requires an action that gives the members different states at stage $k$ or introduces new image-dependent information.
\end{theorem}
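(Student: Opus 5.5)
The proof is a pushforward argument. The core claim is that the joint law of $(\bz,Q,R,\omega)$ under member $\bx_0$ equals its law under $\bx_1$. Everything else follows because verifiers, language-side actions and decision rules are measurable functions or Markov kernels of that tuple.

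\emph{Equality of laws.} Write $\bs:=\bs_k(\bx_0)=\bs_k(\bx_1)$, which holds by Definition~\ref{as:hash}. By Assumption~\ref{as:factor} there is a single kernel $\kappa$ with $\mathrm{Law}(\bz\mid\bx)=\kappa(\bs_k(\bx),\cdot)$, so both members induce the law $\kappa(\bs,\cdot)$ on $\bz$. By Assumption~\ref{as:shared}, $(Q,R)$ are the identical token sequences under both members. The randomness $\omega$ has a fixed law independent of $(\bx,\bz)$. Hence the joint law is the product $\kappa(\bs,\cdot)\otimes\delta_{(Q,R)}\otimes P_\omega$ under both members.

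\emph{Verifier output and language-side actions.} The verifier is a Markov kernel on $(\bz,Q,R)$ driven by $\omega$. Its verdict law, and in particular $f(\bx_0,Q,R)=f(\bx_1,Q,R)$, is therefore determined by the common joint law. Any $a\in\Lc$ is also a kernel acting on $(\bz,Q,R)$, and it may draw fresh independent randomness. Composing it with the verifier gives another kernel of the same tuple, so the post-action output law also coincides on the two members. Closure of $\Lc$ under composition and finite adaptivity (Definition~\ref{def:ls}) covers \textsf{att}, \textsf{rea} with any number of passes, and self-consistency by induction on the number of steps. Deterministic re-supply of the same image recomputes the same $\bs$, so it falls under the same argument.

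\emph{Consequences (i)--(iii).} For (i), let $p$ be the common acceptance probability $p_{i,0}=p_{i,1}$. Then each pair contributes $\tfrac12(p+1-p)=\tfrac12$, and averaging over pairs gives $\mathrm{BA}=1/2$. The same holds for any decision rule whose output law is a function of that common law. For (ii), let $p'$ be the post-action acceptance probability, again equal on both members. The pair correctness after the action is $\tfrac12(p'+1-p')=\tfrac12$, so $\Delta_i(a)=\tfrac12-\tfrac12=0$. For (iii), let $a$ be any action that keeps $\bs_k$ equal across the members and preserves Assumption~\ref{as:iface}. Its result again satisfies the hypotheses of the equality-of-laws step, so the same computation gives $\Delta_i(a)=0$. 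The final sentence of the theorem is the contrapositive of (iii): if $\Delta_i(a)>0$, then either the stage-$k$ states differ or the factorization assumption fails, meaning new image-dependent information enters.

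\emph{Main obstacle.} The delicate step is making the phrase ``preserves the law'' rigorous for adaptive language-side procedures whose internal randomness is interleaved with the model's outputs. I would formalize each step as a kernel from $(\bz,Q,R,\text{history})$ and use independence of the fresh randomness. A Kolmogorov/Ionescu-Tulcea construction then shows that the law of the whole trajectory is a functional of the law of $(\bz,Q,R)$ alone.
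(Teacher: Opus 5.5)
Your proposal is correct and follows the paper's proof essentially step for step. The steps match: the joint law of $(\bz,Q,R,\omega)$ is equal because the stage-$k$ state is shared and Assumption~\ref{as:iface} holds; laws are pushed forward through language-side maps; the value $\tfrac12(p+1-p)=\tfrac12$ is computed before and after the action; and (iii) follows by re-applying the same argument to the post-action alias. The adaptivity issue you flag as the main obstacle is already handled by Definition~\ref{def:ls}, which folds all interleaved randomness into a single $\omega$ independent of $(\bx,\bz)$ and treats any finite adaptive procedure as one measurable function of $(\bz,Q,R,\omega)$, so the Ionescu--Tulcea construction is not needed.
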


Theorem~\ref{thm:cert} is the data-processing inequality in its degenerate case, where a Markov kernel applied to identical inputs produces identical output laws. Its value is that an exact digest comparison establishes the hypothesis without model weights or queries. Assumption~\ref{as:factor} is the theorem's architectural requirement. It excludes a parallel high-resolution branch or raw-image side channel unless the collision occurs before that path branches. Certification also requires equality over every image-dependent field emitted by the processor, so we include all enumerated fields in a single digest (Appendix~\ref{app:complete}, Table~\ref{tab:fields}). The certificate assumes that this enumeration is complete. We verify that every enumerated field affects the digest by perturbing each one in isolation.

Three consequences follow. Proofs are in Appendix~\ref{app:proofs}.

\begin{proposition}[Sharpness]\label{prop:sharp}
Fix a pair, write $\mu_b$ for the law of $\bz$ under member $b$, and let $\mathrm{BA}=\tfrac12(p_0+1-p_1)$ be that pair's value. \emph{(a)} Every verifier reading only the interface, including after any language-side action, has $\mathrm{BA}=1/2$ if and only if $\mu_0=\mu_1$. \emph{(b)} The highest pair-balanced accuracy attainable from the interface is $\tfrac12\bigl(1+\TV(\mu_0,\mu_1)\bigr)$, where the total-variation distance is $\TV(\mu_0,\mu_1)=\sup_A\lvert\mu_0(A)-\mu_1(A)\rvert$. Thus, if $\mu_0\neq\mu_1$, some verifier outperforms chance. If the interface is deterministic and $\bz_0\neq\bz_1$, some verifier attains $\mathrm{BA}=1$.
\end{proposition}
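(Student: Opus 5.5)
The plan is to reduce everything to the classical two-point testing problem, using the fact that a verifier reading only the interface is a Markov kernel $V$ from $(\bz,Q,R)$ to $\{0,1\}$. Composition with a language-side action from $\Lc$ gives another such kernel. Under Assumption~\ref{as:shared}, $Q$, $R$ and the internal randomness $\omega$ are shared and independent of $(\bx,\bz)$, so I fix them throughout. Define the acceptance function $\phi(\bz)=\Pr[V \text{ accepts}\mid \bz,Q,R]\in[0,1]$, which is measurable in $\bz$. Then $p_b=\int\phi\,d\mu_b$ and
\[
\mathrm{BA}=\tfrac12\Bigl(1+\int\phi\,d\mu_0-\int\phi\,d\mu_1\Bigr).
\]
Every interface-only verifier, including one preceded by any language-side action, therefore corresponds to a measurable $\phi$ with values in $[0,1]$. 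Conversely, every such $\phi$ is realized by the randomized verifier that accepts with probability $\phi(\bz)$. So both parts become statements about $\sup_\phi\int\phi\,d(\mu_0-\mu_1)$.

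For part (b), I use the standard identity
\[
\sup_{0\le\phi\le1}\int\phi\,d(\mu_0-\mu_1)=\TV(\mu_0,\mu_1).
\]
The upper bound comes from the Hahn--Jordan decomposition of the signed measure $\mu_0-\mu_1$. Let $A^\ast$ be a Hahn positive set. Then $\int\phi\,d(\mu_0-\mu_1)\le(\mu_0-\mu_1)(A^\ast)$ for every $0\le\phi\le1$, and $(\mu_0-\mu_1)(A^\ast)=\sup_A(\mu_0(A)-\mu_1(A))$. This one-sided supremum equals $\sup_A\lvert\mu_0(A)-\mu_1(A)\rvert$: both are probability measures, so $\mu_0(A^c)-\mu_1(A^c)=-(\mu_0(A)-\mu_1(A))$, and taking complements makes the sign immaterial. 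The bound is attained by the deterministic verifier $\phi=\mathbf 1_{A^\ast}$. Substituting gives the maximal value $\tfrac12(1+\TV(\mu_0,\mu_1))$.

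Part (a) then follows. If $\mu_0=\mu_1$, every $\phi$ integrates identically against both laws, so $p_0=p_1$ and $\mathrm{BA}=1/2$ for every verifier. Conversely, if $\mu_0\ne\mu_1$, there is a measurable $A$ with $\mu_0(A)\ne\mu_1(A)$, so $\TV>0$. The indicator of the Hahn set, or of $A$ or $A^c$ according to sign, gives $\mathrm{BA}>1/2$, so some verifier beats chance. For the deterministic case, $\mu_b=\delta_{\bz_b}$ with $\bz_0\ne\bz_1$. Taking $A=\{\bz_0\}$ gives $\TV=1$, and the verifier $\phi=\mathbf 1\{\bz=\bz_0\}$ has $p_0=1$, $p_1=0$, hence $\mathrm{BA}=1$.

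The main obstacle is justifying that the verifier class is exactly the set of $[0,1]$-valued measurable functions of $\bz$ and no larger. Concretely, I must show that composing with language-side actions never lets a verifier depend on $\bx$ beyond $\mu_b$. I would handle this by invoking closure of $\Lc$ under composition and finite adaptivity (Definition~\ref{def:ls}) together with Assumption~\ref{as:factor}. A composition of kernels applied to $(\bz,Q,R)$ with independent randomness is again a kernel in $\bz$, so its acceptance probability is some $\phi(\bz)$. There is also a minor measure-theoretic point: the interface is a finite serialized object, so $\bz$ lives in a countable space, singletons are measurable, and the Hahn set can be written explicitly as $\{\bz:\mu_0(\bz)\ge\mu_1(\bz)\}$.
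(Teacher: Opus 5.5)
Your proposal is correct and follows essentially the same route as the paper, which also reduces (b) to the Neyman--Pearson identity and derives (a) and the deterministic case from it. The differences are cosmetic. The paper takes densities $q_0,q_1$ with respect to $\mu_0+\mu_1$ and uses the set $\{q_0>q_1\}$ where you use a Hahn positive set, and it treats randomized verifiers as mixtures of deterministic ones where you bound $\int\phi\,d(\mu_0-\mu_1)$ directly for every $0\le\phi\le1$.
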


Part (b) is the Neyman--Pearson identity for testing between two laws \citep{lecam1986,tsybakov2009}, so for a deterministic interface only an exact collision certifies chance performance without assumptions about the verifier.

\begin{corollary}[Screen and ceiling]\label{cor:screen}
\begin{enumerate}[label=(\alph*),ref=\thetheorem(\alph*),leftmargin=1.6em,itemsep=0pt,topsep=1pt,parsep=0pt]
\item\label{cor:screen-part} On a certified pair, for every $\lambda>0$ and any costs satisfying $c(\textsf{att}),c(\textsf{rea})>0$, $g_i(\textsf{att})<g_i(\textsf{stop})$ and $g_i(\textsf{rea})<g_i(\textsf{stop})$. Thus, $\textsf{att}$ and $\textsf{rea}$ are strictly dominated by $\textsf{stop}$.
\item\label{cor:ceiling-part} If a fraction $\kappa$ of a population of pairs is certified, every verifier, before or after any language-side action, satisfies $\mathrm{BA}\le1-\kappa/2$. Equality holds if the verifier separates every uncertified pair perfectly.
\end{enumerate}
\end{corollary}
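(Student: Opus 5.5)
Both parts follow from Theorem~\ref{thm:cert}, applied pair by pair, together with the definitions of gain and pair-balanced accuracy in Section~\ref{sec:setup}. For part~(a), we fix a certified pair $i$. Since $\textsf{att},\textsf{rea}\in\Lc$, Theorem~\ref{thm:cert}(ii) gives $\Delta_i(\textsf{att})=\Delta_i(\textsf{rea})=0$ exactly. Substituting into $g_i(a)=\Delta_i(a)-\lambda c(a)$ gives $g_i(a)=-\lambda c(a)<0$ whenever $\lambda>0$ and $c(a)>0$. Meanwhile $g_i(\textsf{stop})=\Delta_i(\textsf{stop})-\lambda\cdot 0=0$, because $\textsf{stop}$ leaves the verdict unchanged. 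The strict inequalities and the strict domination follow at once. The only point to check is that the definition of $\Delta_i$ on a pair is the change in the member-averaged correctness. Theorem~\ref{thm:cert} gives equal output laws on both members before and after the action, so the averaged correctness is exactly $\tfrac12$ both times and its change is $0$.

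For part~(b), we write the population BA as a mixture over certified and uncertified pairs. The mixture is $\mathrm{BA}=\kappa\,\Ex[\mathrm{BA}_i\mid\text{certified}]+(1-\kappa)\,\Ex[\mathrm{BA}_i\mid\text{uncertified}]$, where $\mathrm{BA}_i=\tfrac12(p_{i,0}+1-p_{i,1})$. On each certified pair, Theorem~\ref{thm:cert}(i), or equivalently Proposition~\ref{prop:sharp}(a) with $\mu_0=\mu_1$, gives $\mathrm{BA}_i=\tfrac12$. This holds for the verifier composed with any language-side action, since $\Lc$ preserves the shared law and the composite is again a kernel on $(\bz,Q,R)$. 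On each uncertified pair we use the trivial bound $\mathrm{BA}_i\le 1$, which holds because $p_{i,b}\in[0,1]$. Combining, $\mathrm{BA}\le \kappa/2+(1-\kappa)=1-\kappa/2$.

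For equality, suppose the verifier separates every uncertified pair perfectly, meaning $p_{i,0}=1$ and $p_{i,1}=0$ on every uncertified pair. Then each uncertified term equals $1$ and the bound is attained. Proposition~\ref{prop:sharp}(b) makes this achievable whenever the interface is deterministic and the uncertified states differ, though the statement only asks for sufficiency.

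The main obstacle is bookkeeping rather than mathematics. We must make sure that ``before or after any language-side action'' is covered uniformly. We will argue that composing $f$ with any element of $\Lc$ yields another Markov kernel on $(\bz,Q,R)$ with randomness independent of $(\bx,\bz)$. Theorem~\ref{thm:cert} then applies to the composite verbatim. We also need to treat $\kappa$ as the probability mass of certified pairs under the population distribution over $i$, so that the mixture decomposition is legitimate.
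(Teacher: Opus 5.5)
Your proposal is correct and follows the paper's proof essentially step for step. Part (a) uses Theorem~\ref{thm:cert}(ii) to get $\Delta_i=0$ for $\textsf{stop}$, $\textsf{att}$ and $\textsf{rea}$, hence $g_i(a)=-\lambda c(a)$, and part (b) splits the population into the certified fraction, where every verifier scores exactly $\tfrac12$, and the remainder, where it scores at most $1$.

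The one quibble is in your optional aside. Proposition~\ref{prop:sharp}(b) supplies a perfectly separating verifier for each pair separately. A single verifier attaining equality over the whole population additionally needs that no tuple $(\bz,Q,R)$ appears as the positive member of one uncertified pair and the negative member of another.
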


Corollary~\ref{cor:screen-part} provides a direct screen for routing decisions. On a certified pair, choosing $\textsf{att}$ or $\textsf{rea}$ wastes compute, and applying the screen requires only the router's initial decision, without running an intervention or observing its outcome.

\paragraph{Routing value} The routing value measures the advantage of an oracle that chooses the best action for each item over the best fixed action. Write
\begin{equation}\label{eq:V}
V=\Ex_i\big[\max_a g_i(a)\big]-\max_a\Ex_i\big[g_i(a)\big].
\end{equation}
At $\lambda=0$, $g_i(a)=\Delta_i(a)$. Attaining $V$ requires distinguishing items that benefit from different actions. A router sees only a \emph{signal} $S=\sigma(\bz,Q,R)$, some statistic of the interface and the text, and it is \emph{interface-only} when $S$ is a function of $\bz$ alone. A router cannot always attain $V$ because it must use the same action distribution for every item in a \emph{fiber} of its signal, defined as the set of items that map to the same signal value. Proposition~\ref{prop:fiber} gives the resulting ceiling, which is the value-of-information inequality in our notation \citep{blackwell1953,howard1966}.

\begin{proposition}[Fiber ceiling]\label{prop:fiber}
Let $\pi(S,\omega)$ be any policy reading a signal $S$, where $\omega$ is auxiliary randomness independent of the item, and define $V_S=\Ex\big[\max_a\Ex[g_i(a)\mid S_i]\big]-\max_a\Ex[g_i(a)]$. Then \emph{(i)} every such policy satisfies $\Ex[g_i(\pi(S_i,\omega))]-\max_a\Ex[g_i(a)]\le V_S$, with equality attained by $\pi_S(S_i)\in\argmax_a\Ex[g_i(a)\mid S_i]$; \emph{(ii)} $0\le V_S\le V$, and $V_{S'}\le V_S$ whenever $S'$ is a function of $S$; and \emph{(iii)} $V_S=V$ if and only if some policy reading $S$ selects a per-item optimal action almost surely.
\end{proposition}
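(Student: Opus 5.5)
The plan is a conditioning argument: condition on $S$, then apply the tower property and the elementary inequality $\max\Ex\le\Ex\max$.

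\emph{Step 1, part (i).} Since $\omega$ is independent of the item, condition on $(S_i,\omega)$. On the event $\{S_i=s,\omega=w\}$ the policy plays the fixed action $a=\pi(s,w)$. Independence gives
\[
\Ex[g_i(a)\mid S_i=s,\omega=w]=\Ex[g_i(a)\mid S_i=s]\le\max_{a'}\Ex[g_i(a')\mid S_i=s].
\]
Taking expectations over $(S_i,\omega)$ yields $\Ex[g_i(\pi(S_i,\omega))]\le\Ex[\max_a\Ex[g_i(a)\mid S_i]]$. Subtracting $\max_a\Ex[g_i(a)]$ gives the bound $V_S$. The deterministic rule $\pi_S$ attains each conditional maximum, so equality holds for it. Because $\Ac$ is finite, a measurable argmax selection exists with ties broken by a fixed order. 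I assume integrability of the $g_i(a)$ throughout.

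\emph{Step 2, part (ii).} $V_S\ge0$ follows because any constant policy is admissible in (i), or directly from $\max_a\Ex[\Ex[g_i(a)\mid S]]\le\Ex[\max_a\Ex[g_i(a)\mid S]]$.

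For $V_S\le V$, note that $\Ex[g_i(a)\mid S_i]\le\Ex[\max_{a'}g_i(a')\mid S_i]$ holds for each $a$. Maximize over $a$ and take expectations to get $\Ex[\max_a\Ex[g_i(a)\mid S_i]]\le\Ex[\max_a g_i(a)]$.

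For monotonicity, suppose $S'=h(S)$. The tower property gives $\Ex[g\mid S']=\Ex[\Ex[g\mid S]\mid S']$, and the same max-inside-expectation inequality applies conditionally on $S'$. Equivalently, every policy reading $S'$ is a policy reading $S$, so by (i) its value is at most $V_S$, and the optimum over $S'$-policies is $V_{S'}$.

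\emph{Step 3, part (iii).} This is the only delicate step.

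For the ``if'' direction, suppose some policy reading $S$ picks $a^*_i\in\argmax_a g_i(a)$ almost surely. Its value is then $\Ex[\max_a g_i(a)]-\max_a\Ex[g_i(a)]=V$. Part (i) caps that value at $V_S$, so $V_S\ge V$, and with (ii) this gives $V_S=V$.

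For the ``only if'' direction, take $V_S=V$, so $\pi_S$ attains $V$. Set $D_i=\max_a g_i(a)-g_i(\pi_S(S_i))\ge0$. Then $\Ex[D_i]=V-V_S=0$, so $D_i=0$ almost surely, meaning $\pi_S$ selects a per-item optimal action almost surely.

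The main obstacle is precision rather than difficulty. The equality $\Ex[g_i(\pi_S(S_i))]=V_S+\max_a\Ex[g_i(a)]$ must be stated with all terms finite. The ``per-item optimal'' condition should be read as realized-gain optimality, i.e.\ pointwise in the item, because the expectation is over items and the randomness in $g_i$ comes only from the item draw.
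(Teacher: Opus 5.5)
Your proof is correct and follows the paper's argument essentially step for step. You condition on $(S_i,\omega)$ where the paper averages over the selection probabilities $p_\pi(a\mid S)$, and otherwise use the same chain $\max_a\Ex[g(a)]\le\Ex[\max_a\Ex[g(a)\mid S]]\le\Ex[\max_a g(a)]$, the tower property for garbling, and the nonnegative integrand $\max_a g(a)-g(\pi_S(S))$ for (iii); your explicit use of (i) for the ``if'' direction of (iii), covering an arbitrary policy rather than only $\pi_S$, fills in a step the paper leaves implicit.
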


Thus, $V_S$ is the greatest improvement over the best fixed action available to a router reading $S$, and we write $V_\Phi$ for the interface-only case $S=\bz$. The gap $V-V_S$ is the routing value hidden by variation within fibers. A certified pair witnesses a fiber with two oppositely labeled items for a particular deployed $\Phi$ under every signal, because its members share $\bz$, $Q$ and $R$, so it screens any router.\looseness=-1

\section{\AF{}: constructing and auditing certified aliases}\label{sec:af}

\AF{} analyzes a deployed processor and, when possible, constructs two images that have different correct answers but are mapped by its resize to exactly the same output. It then verifies equality across the complete visual interface and records the evidence needed to certify the pair. Algorithm~\ref{alg:af} gives the complete audit and construction procedure.

A pixel-offset pattern computed in the real-valued null space generally contains fractional entries, which must be rounded to form a valid image. None of the $72$ rounded candidates we tested as a control certified, with roughly $1\%$ of their output values remaining unequal (Appendix~\ref{app:e0}). We therefore construct the offset patterns using the resampler's exact integer arithmetic. The resize implementations selected by the target processors represent their filter weights as fixed-point integers rather than floating-point numbers and differ only in final rounding. Their pre-rounding computations can therefore be represented exactly by an integer matrix $K$.

The resize is \emph{separable}, meaning that it applies one-dimensional filtering independently in the horizontal and vertical directions. \AF{} therefore analyzes the horizontal operation. Each column of $K$ describes how the corresponding input-image column contributes to the resized output. Because each output column depends only on nearby input columns, the search can be restricted to $W$ adjacent input columns, which we call a \emph{window}. Let $K_W$ denote the corresponding restriction of $K$. The \emph{kernel} $\ker K_W$ is the set of offset patterns that $K_W$ maps to zero. \AF{} searches this kernel for a nonzero integer pattern $v\in\ker K_W\cap\mathbb{Z}^{W}$. Each entry of $v$ specifies how much to change the pixels in one column of the window. Multiplying $v$ by an integer increases its contrast without moving it out of the kernel. Repeating the resulting offsets across selected image rows creates a visible, label-changing pattern. Because the horizontal resize maps every modified row to zero, the subsequent vertical resize also receives zero difference, leaving the final resized output unchanged. We call this pattern the \emph{carrier}. It can be added to any image whose affected pixels have enough range to avoid clipping, including natural images and synthetic renders.

Our construction is not available at every resize ratio. It requires a \emph{realizable} vector, a nonzero integer vector in $\ker K_W$ whose largest absolute entry is at most $255$, the largest possible difference between two 8-bit pixel values. Whether such a vector exists depends on both the resize ratio and the resampling filter. These vectors form the integer lattice $L=\ker K_W\cap\mathbb{Z}^{W}$, a discrete set closed under integer combinations. Because $L$ is determined entirely by the deployed resize configuration, constructibility can be evaluated without model weights or queries. Away from the image border, a resampler at ratio $p/q$ in lowest terms is periodic, so it is described by a $q\times p$ matrix $H(\zeta)$ of integer polynomials in an indeterminate $\zeta$, its \emph{polyphase matrix} \citep{vaidyanathan1993}, and an integer offset vector $v$ on interior columns by a polynomial vector $\hat v(\zeta)\in\mathbb{Z}[\zeta]^{p}$ (Appendix~\ref{app:proofs}).

\begin{theorem}[Existence and size of integer kernel vectors]\label{thm:poly}
Let $K$ be the fixed-point resize operator at ratio $p/q>1$, and let $v$ be an integer offset vector on interior columns (Remark~\ref{rem:border}).
\begin{enumerate}[label=(\alph*),ref=\thetheorem(\alph*),leftmargin=1.6em,itemsep=0pt,topsep=1pt,parsep=0pt]
\item\label{poly:char} \emph{Existence.} $Kv=0$ if and only if $H(\zeta)\hat v(\zeta)=0$. The solutions form a $\mathbb{Z}[\zeta]$-module of rank at least $p-q\ge1$, so every wide enough image admits finitely supported kernel vectors $v\neq0$.
\item\label{poly:principal} \emph{Principal case.} If $p=q+1$ and $H$ has full row rank, every integer kernel vector is $\hat v=\xi\,w$ for some $\xi\in\mathbb{Z}[\zeta]$, where $w$ is the vector of signed maximal minors of $H$ divided by their greatest common divisor. Every nonzero interior kernel vector of any support width then satisfies $\lVert v\rVert_\infty\ge\beta(K):=\max_{s:\,w_s\neq0}\max\{\lvert\mathrm{lc}(w_s)\rvert,\lvert\mathrm{tc}(w_s)\rvert\}$, where $\mathrm{lc}$ and $\mathrm{tc}$ are the highest- and lowest-order nonzero coefficients, and $w$ itself is a kernel vector of sup-norm $\lVert w\rVert_\infty$.
\end{enumerate}
\end{theorem}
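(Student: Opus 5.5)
We first fix the polyphase set-up. At ratio $p/q$ in lowest terms, away from the border the resampler maps each block of $p$ consecutive input columns to $q$ output columns with a filter pattern that repeats, shifted by $p$ inputs per $q$ outputs. Index interior input columns as $pm+r$ with $r\in\{0,\dots,p-1\}$, and set $\hat v_r(\zeta)=\sum_m v_{pm+r}\zeta^m$. Index output columns as $qn+t$ with $t\in\{0,\dots,q-1\}$. Output $qn+t$ is an integer combination of inputs $pm+r$ whose coefficient depends only on $(t,r,m-n)$. Collecting these coefficients into Laurent polynomials gives $H_{t,r}(\zeta)$, after multiplying by a power of $\zeta$ to clear negative exponents, which changes nothing below. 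Then $(Kv)$ restricted to output phase $t$ has generating function $\sum_r H_{t,r}(\zeta^{-1})\hat v_r(\zeta)$, or the analogous expression with $\zeta$, depending on the convention. So $Kv=0$ on the interior if and only if $H\hat v=0$ coefficientwise, since a Laurent polynomial vanishes exactly when all its coefficients do. This proves the equivalence in (a).

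\emph{Rank in (a).} The solution set is the kernel of a $\mathbb{Z}[\zeta]$-linear map $\mathbb{Z}[\zeta]^p\to\mathbb{Z}[\zeta]^q$, so it is a submodule. Over the field $\mathbb{Q}(\zeta)$ its dimension is $p-\operatorname{rank}H\ge p-q\ge1$, because $p>q$. Any rational-function kernel vector can be scaled by a common denominator to give a nonzero polynomial vector in the kernel. Its coefficients form a finitely supported integer $v$, and its support width is bounded by $p$ times (degree $+1$) plus the filter width, which yields the ``wide enough'' condition.

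\emph{Principal case (b).} With $p=q+1$ and $H$ of full row rank $q$, the kernel over $\mathbb{Q}(\zeta)$ is one-dimensional. By the Cramer/Laplace identity, the vector of signed maximal minors $m=(m_s)$, with $m_s=(-1)^s\det H_{\hat s}$, satisfies $Hm=0$. Expanding each row of $H$ appended to $H$ gives a determinant with a repeated row, which is zero. Since $m\neq0$ by full rank, it spans the kernel. Divide by $g=\gcd_s m_s$ in the UFD $\mathbb{Z}[\zeta]$, which is well defined up to sign, to get a primitive $w$. Now take any integer kernel vector $\hat v$. Then $\hat v=(a/b)w$ with $a,b\in\mathbb{Z}[\zeta]$ coprime, so $b\hat v=aw$. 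Because $w$ is primitive, $b$ divides $a w_s$ for all $s$ and hence divides $\gcd_s(aw_s)=a$ up to a unit. So $b$ is a unit, $\hat v=\xi w$ with $\xi\in\mathbb{Z}[\zeta]$, and $w$ itself is a kernel vector of norm $\|w\|_\infty$. The sup-norm bound then comes from Gauss-type leading and trailing coefficient multiplicativity. For $\xi\neq0$ and $w_s\neq0$ we have $\mathrm{lc}(\xi w_s)=\mathrm{lc}(\xi)\mathrm{lc}(w_s)$ and $\mathrm{tc}(\xi w_s)=\mathrm{tc}(\xi)\mathrm{tc}(w_s)$, with $|\mathrm{lc}(\xi)|,|\mathrm{tc}(\xi)|\ge1$ since they are nonzero integers. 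Each coefficient of $\hat v_s$ is an entry of $v$, so $\|v\|_\infty\ge\max(|\mathrm{lc}(w_s)|,|\mathrm{tc}(w_s)|)$ for every $s$ with $w_s\ne0$. Taking the maximum gives $\beta(K)$.

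\emph{Main obstacle.} We expect the main difficulty to be the bookkeeping in the first step. We must show that the fixed-point resampler is genuinely periodic with period $(p,q)$ on the interior, that indexing shifts and the Laurent-to-polynomial normalization do not affect kernel membership, and that restricting to interior columns (Remark~\ref{rem:border}) means a finitely supported $v$ whose filter footprint avoids the border gives exactly $Kv=0$. We would first handle this by fixing explicit phase offsets and checking one period directly. The algebraic parts, namely the Cramer kernel, primitivity in the UFD, and multiplicativity of lc and tc, are standard.
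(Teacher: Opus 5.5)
Your proposal is correct and follows essentially the same route as the paper's proof. You rewrite $Kv=0$ in polyphase form as $H(\zeta)\hat v(\zeta)=0$ and, for (a), count dimensions over $\mathbb{Q}(\zeta)$ and clear denominators; for (b), you take the signed maximal minors (Laplace expansion with a repeated row), divide by their gcd in the UFD $\mathbb{Z}[\zeta]$, use a coprime-fraction argument to force $\hat v=\xi w$, and apply multiplicativity of leading and trailing coefficients to get $\lVert v\rVert_\infty\ge\beta(K)$. You leave two small points implicit, both of which the paper states in one line. First, clearing denominators in a whole basis gives $p-q$ independent module elements, which is what bounds the module rank. Second, shifting the support only multiplies $\hat v$ by a power of $\zeta$, which leaves $\mathrm{lc}$ and $\mathrm{tc}$ unchanged.
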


Non-constructibility is therefore always a question of size. The principal case covers $2\times$, $1.5\times$ and $4/3\times$, and the rank condition holds for every operator we reconstruct (Table~\ref{tab:beta}). For bicubic resampling at $2\times$, $w$ encodes the carrier $(1,-3,3,-1)$ and $\beta(K)=3=\lVert w\rVert_\infty$, so our carrier is the smallest possible at any interior support width. For Lanczos at $2\times$, $\beta(K)=63{,}150$, and at $1.5\times$ it exceeds $5\times10^{9}$ for bicubic, bilinear and Lanczos, so no realizable interior carrier exists at any width. Outside the principal case, such as $3\times$ and $2.5\times$, we bound window-supported vectors.

We reduce a basis of $L$ to search for a realizable vector. If none is found, the bound in Proposition~\ref{prop:construct} may certify that no such vector exists within the specified window, because every nonzero $v\in L$ satisfies $\lVert v\rVert_\infty\ge\min_i\lVert b_i^*\rVert/\sqrt{W}$ for the Gram--Schmidt vectors $b_i^*$ of any basis of $L$ (Appendix~\ref{app:proofs}). Otherwise, the configuration remains unresolved. The bound covers the rank-one differences $\bd=uv^{\!\top}$ that \AF{} produces, where the binary vector $u$ selects the affected rows. Appendix~\ref{app:proofs} extends it to differences of any rank, which rules out both declared configurations at $1.5\times$.\looseness=-1

Applied to the $20$ declared configurations using bicubic resampling, the criterion returns a definitive verdict in every case (Appendix~\ref{app:construct}). Seventeen are constructible and three are ruled out. All constructible configurations use integer resize ratios, while the three exclusions use non-integer ratios (Table~\ref{tab:screen}). These verdicts depend only on $K$ and are obtained before an image is rendered or a model is queried.

We also reconstructed $K$ for $28$ resampling-filter and ratio combinations. The reconstruction matched the library bit for bit in $27$ cases, and the criterion returned a definitive verdict in $26$. No tested filter eliminates the construction at every ratio (Table~\ref{tab:kernels}, Appendix~\ref{app:construct}).

\begin{table}[t]
\centering
\footnotesize
\renewcommand{\arraystretch}{0.98}
\caption{\textbf{Instrument validation.} Targets are Qwen2.5-VL-7B \citep{qwen25vl}, LLaVA-1.5-7B \citep{llava15} and VisualPRM-8B \citep{visualprm}. Predictions matched the observed outcomes in all $192$ attempts, and all $18$ measured decision-logit gaps between members were exactly zero.}
\label{tab:e0}
\vspace{3pt}
\setlength{\tabcolsep}{4pt}
\begin{tabular*}{\textwidth}{@{\hspace{\tinset}\extracolsep{\fill}}l L{0.25\textwidth} r@{\extracolsep{0pt}\hspace{0.35em}}l@{\extracolsep{\fill}} L{0.23\textwidth}@{\hspace{\tinset}}}
\thead
\thd{Target or control} & \thd{Resize configuration} & \multicolumn{2}{c}{\thd{Certified pairs}} & \thd{Member logit gap} \\
\theadend
Qwen2.5-VL-7B & $2\times$ downscale & \tkey{cCert}{$15$} & \tkey{cCert}{of $15$} & \tkey{cCert}{zero} on all $8$ scored \\
\thairx{1pt}
LLaVA-1.5-7B & fixed $2\times$ resize & \tkey{cCert}{$10$} & \tkey{cCert}{of $10$} & \tkey{cCert}{zero} on all $10$ \\
\thairx{1pt}
VisualPRM-8B & $3{\times}3$ tiles and thumbnail & \tkey{cCert}{$20$} & \tkey{cCert}{of $20$} & \tmut{not available} \\
\tgroupsep
All negative controls & four control designs & \tkey{cExcl}{$0$} & \tkey{cExcl}{of $147$} & \tkey{cExcl}{nonzero} when available \\
\bottomrule
\end{tabular*}
\setlength{\tabcolsep}{6pt}
\end{table}

\paragraph{Instrument validation} Table~\ref{tab:e0} reports the certification experiment. The constructibility criterion was evaluated before each experiment, so the table compares prospective predictions with observed outcomes. For the second architecture, the criterion predicted constructibility using only the deployed processor, and the construction transferred without modification. The dynamic-tiling target provides a more demanding test because it processes nine $2\times$ tiles and a $6\times$ thumbnail, requiring the pair to collide under both resize operators. All $20$ constructed pairs certified across the nine tiles, the thumbnail, and every recorded interface field, while none of the $20$ same-support, same-amplitude patterns outside the kernel did. An identity-resize control also did not certify, confirming that the construction requires both exact kernel membership and a lossy resize. Four additional pipelines certified at the processor level (Table~\ref{tab:breadth}), and Appendix~\ref{app:e0} gives the full controls.

The construction extends beyond synthetic base images. On $200$ benchmark images resized to the eligible $2\times$ regime, selecting carrier rows with sufficient intensity headroom produced $48$ certified pairs at amplitude $20$, of which $8$ keep the marks clearly legible (Figure~\ref{fig:pair}, Appendix~\ref{app:construct}).

\begin{figure}[t] \centering
\includegraphics[width=\textwidth,trim=0 0 0 4pt,clip]{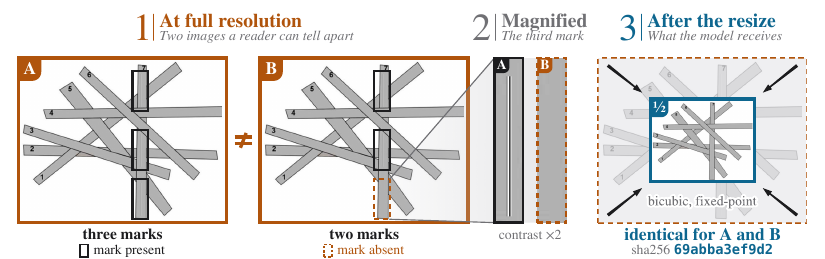} \caption{\textbf{A certified pair on a MathVision image.} (1)~Member A contains three marks and member B contains two, giving different correct counts. (2)~The third mark is magnified at twice its original contrast. (3)~After $2\times$ resizing, the tensors are bit-identical.}
\label{fig:pair}
\end{figure}


\section{Interface accessibility and routing}\label{sec:ias}

To test whether an interface-only router can recognize when relevant visual evidence is unavailable, we need a score computed from the interface alone. We define this score relative to the binary fact associated with each item. On the natural stream of benchmark items, the fact is whether the candidate reasoning step is correct. The question $Q_i$ and step $R_i$ identify the fact being evaluated but are not decoder features. An auxiliary decoder $d_\theta$, implemented as logistic regression on a projection of $\bz$ (Appendix~\ref{app:protocol}), estimates the probability $\hat p_\theta(1\mid\bz_i)$ that the fact is positive. The \emph{interface accessibility score} is $\IAS_i=\mathrm{cal}\!\left(\hat p_\theta(1\mid\bz_i)\right)$, where $\mathrm{cal}$ is an out-of-fold isotonic calibration map. We cross-fit the decoder with folds grouped by source image and, on the natural stream, by \emph{record}, the source problem containing each item's step. The certified set provides an absent-side calibration anchor at $\IAS=1/2$.

For each action, a linear model fitted on a fixed random half of the records, the development half, estimates its gain $\hat g_i(a)$ from $\IAS_i$. The router \ROUTE{} selects the action with the largest estimated gain. At test time, this routing decision depends only on $\bz$. The question and candidate step are passed to the selected action after the decision is made.

\paragraph{Evaluation metrics} A policy's regret is its expected shortfall relative to the per-item oracle $\pi^\star$,
\begin{equation}\label{eq:gain}
\mathrm{regret}(\pi)=\Ex_i\big[g_i(\pi^\star(i))-g_i(\pi(i))\big],\qquad
\pi^\star(i)=\argmax_{a\in\Ac} g_i(a).
\end{equation}
Order accuracy is the fraction of items on which $\pi$ selects the same action as $\pi^\star$. The dominated-action rate is the fraction of certified pairs sent to $\textsf{att}$ or $\textsf{rea}$, which gain nothing there and are strictly dominated for every $\lambda>0$ (Corollary~\ref{cor:screen-part}). All headline results use raw gains with $\lambda=0$. We also evaluate the sweep $\lambda\in\{0,0.005,0.01,0.02,0.05,0.1\}$ using the measured action costs in Appendix~\ref{app:actions}.

\paragraph{Experimental design} Because a deployed system does not know which items it answered incorrectly, we evaluate every item rather than only observed errors. To isolate the effect of re-examining the image, gains are measured against a \emph{path-matched} baseline that follows the same critique procedure but omits the instruction to re-examine the image (Appendix~\ref{app:extra}). To avoid selection bias, each item is measured twice, with per-item maxima and regrets selected on one replicate and evaluated on the other (Appendix~\ref{app:bookkeeping}, Table~\ref{tab:ladder}). The primary endpoint tests whether a score calibrated on certified pairs can guide routing on natural items. Its pre-registered routing-headroom gate was powered to detect $V\ge0.03$ using $160$ templates of eight items (Appendix~\ref{app:power}, Table~\ref{tab:power}). All remaining tests are secondary and Holm-corrected within each family, and intervals are clustered by record.

\section{Routing evaluation}\label{sec:results}

The instrument-validation experiments in Section~\ref{sec:af} test four pre-registered claims, all of which pass. The validation experiments show that the criterion correctly predicts which resize configurations admit exact collisions, rounded approximate constructions fail certification, certified pair members produce identical decision logits, and the digest detects changes to every recorded interface field. The primary experiment tests a fifth claim that a label-free interface score calibrated to chance when relevant evidence is provably absent can guide routing on natural items, but the score does not improve over always stopping. Two pre-registered gates make this negative result interpretable. They require measurable headroom for per-item routing and a probe that recovers known-present information without fitting shuffled labels. Both pass, ruling out insufficient routing opportunity and failure on either probe control as simple explanations.

\begin{table}[!ht]
\centering
\footnotesize
\renewcommand{\arraystretch}{0.92}
\caption{\textbf{Certified-pair screen and natural-stream routing.} The screen reports dominated-action rates on $80$ certified Qwen2.5-VL-7B pairs (Corollary~\ref{cor:screen-part}). Natural results use $1280$ VisualProcessBench items from $1018$ records with a frozen Qwen2.5-VL-7B verifier. Gain is relative to stopping at $\lambda=0$, with $95\%$ intervals clustered by record. $^\dagger$Best of six, Holm-adjusted $p=0.11$.}
\label{tab:main}
\vspace{3pt}
\setlength{\tabcolsep}{3pt}
\newcommand{\ci}[1]{{\scriptsize\tmut{$[#1]$}}}
\newcommand{\ours}{\,\tmut{(ours)}}
\begin{tabular*}{\textwidth}{@{\hspace{\tinset}\extracolsep{\fill}}l
    S[table-format=1.3]
    @{\hspace{2.2em}}
    S[table-format=+1.3,retain-explicit-plus=true]
    @{\extracolsep{0pt}\hspace{0.5em}}l@{\extracolsep{\fill}}
    S[table-format=1.3]
    S[table-format=1.3]@{\hspace{\tinset}}}
\theadn{3}{3.6pt}
& \multicolumn{1}{c}{\thd{Certified pairs}} & \multicolumn{4}{c@{\hspace{\tinset}}}{\thd{Natural stream}} \\
\tpartial{\cmidrule(lr){2-2}\cmidrule(l){3-6}}
& \multicolumn{1}{c}{\thd{Dominated}} & \multicolumn{2}{c}{\thd{Gain over \textsf{stop} $\uparrow$}} & & \multicolumn{1}{c@{\hspace{\tinset}}}{\thd{Order}} \\
\thd{Routing method}
& \multicolumn{1}{c}{\thd{action rate $\downarrow$}}
& \multicolumn{2}{c}{\thdsub{with 95\% interval}}
& \multicolumn{1}{c}{\thd{Regret $\downarrow$}}
& \multicolumn{1}{c@{\hspace{\tinset}}}{\thd{accuracy $\uparrow$}} \\
\theadend
\tgroup{6}{Fixed policies}{}
Always \textsf{stop}      & \color{cCert}0.000 &  0.000 & {\scriptsize\tmut{reference}}  & 0.105 & 0.271 \\
\thairx{0.4pt}
Always \textsf{re-attend} & \color{cExcl}1.000 & +0.002 & \ci{-0.008,\,+0.010} & 0.103 & 0.281 \\
\thairx{0.4pt}
Always \textsf{reason}    & \color{cExcl}1.000 &  0.000 & \ci{-0.007,\,+0.008} & 0.104 & 0.188 \\
\thairx{0.4pt}
Always \textsf{re-encode} & \color{cCert}0.000 & -0.021 & \ci{-0.035,\,-0.008} & 0.127 & 0.260 \\
\tgroupsep
\tgroup{6}{Protocol-specified label-free policies}{}
Verifier confidence      & \color{cExcl}1.000 & -0.001 & \ci{-0.009,\,+0.007} & 0.106 & 0.202 \\
\thairx{0.4pt}
\ROUTE{}\ours           & \color{cExcl}1.000 & +0.001 & \ci{-0.007,\,+0.009} & 0.104 & 0.188 \\
\tgroupsep
\tgroup{6}{Median-threshold diagnostic}{}
Verifier confidence      & \color{cExcl}0.838 & +0.015 & \ci{+0.008,\,+0.023} & 0.089 & 0.344 \\
\thairx{0.4pt}
Raw interface distance   & \color{cExcl}0.838 & +0.021 & \ci{+0.014,\,+0.028} & 0.084 & 0.369 \\
\thairx{0.4pt}
Random score             & \color{cExcl}0.414 & +0.018 & \ci{+0.009,\,+0.026} & 0.087 & 0.368 \\
\thairx{0.4pt}
\ROUTE{}\ours           & \color{cExcl}0.838 & +0.016 & \ci{+0.009,\,+0.024} & 0.089 & 0.366 \\
\tgroupsep
\tgroup{6}{Oracle-supervised references}{}
Learned router on measured $\Delta$, from $\bz$ & \color{cExcl}0.063 & +0.003 & \ci{-0.006,\,+0.012} & 0.102 & 0.259 \\
\thairx{0.4pt}
Learned router, from $(\bz,Q,R)$\,$^\dagger$ & \color{cCert}0.000 & +0.010 & \ci{+0.002,\,+0.018} & 0.095 & 0.283 \\
\thairx{0.4pt}
Per-item oracle (split-sample)       & \color{cCert}0.000 & +0.058 & \ci{+0.050,\,+0.067} & 0.047 & 0.659 \\
\bottomrule
\end{tabular*}
\setlength{\tabcolsep}{6pt}
\end{table}

\paragraph{Routing headroom} The best action varies across items even though no single action improves on stopping on average. The split-sample routing value is $V=0.056$ over the best fixed action, with a record-clustered $95\%$ interval of $[0.047,0.062]$, and the per-item oracle gains $+0.058$ over stopping (Table~\ref{tab:main}). Against the path-matched baseline, re-attention gains $+0.002$ and further reasoning gains $0$, both with intervals covering zero, while re-encoding loses $0.021$ with an interval of $[-0.035,-0.008]$. Thus, positive routing value reflects item-level variation rather than an action that helps uniformly. By contrast, a split-sample ten-crop oracle gains $+0.035$ over stopping, with a $95\%$ interval of $[+0.022,+0.047]$, while the blind crop loses $0.021$ \citep{vicrop}. This shows additional headroom from selecting the right interface-changing view, but only with oracle supervision.\looseness=-1

\paragraph{Probe controls and the primary endpoint} The probe, our decoder $d_\theta$, behaves as expected at all three anchors. Its AUC is $0.9999$ when the fact survives the interface, $0.52$ under random labels and exactly $1/2$ on certified pairs, where chance is guaranteed. Yet \ROUTE{} does not improve over stopping under the protocol-specified rule, with a gain of $+0.001$ and an interval of $[-0.007,+0.009]$, and neither does verifier confidence. The median-threshold diagnostic produces gains of $+0.015$ to $+0.021$ across \IAS{}, confidence, a random score and raw interface distance. Because the random score performs similarly and all gains disappear when the non-stop action is fixed, the improvement comes from replicate-informed action selection rather than the scores. The same pattern holds across all five source benchmarks. Each has positive routing value, but neither \ROUTE{} nor the learned router significantly beats its best fixed action (Table~\ref{tab:sources}).

Proposition~\ref{prop:fiber} explains how positive routing value can coexist with this result. An interface-only router is limited to $V_\Phi$, which may be smaller than the full per-item value $V$. Six oracle-supervised routers trained directly on natural-item gains using the vision tower's outputs achieve only $-0.003$ to $+0.003$ over stopping, with every interval covering zero (Table~\ref{tab:routers}). Thus, poor transfer from certified pairs cannot by itself explain why the routers fail to improve over stopping. These gains only lower-bound $V_\Phi$ and do not establish that $V_\Phi=0$. Six further routers read the verifier's hidden state at the decision position, a signal of $(\bz,Q,R)$. The best gains $+0.010$, about one sixth of $V$, with an interval of $[+0.002,+0.018]$ that does not survive Holm correction across the six. The natural evidence remains behavioral. The verdict is unchanged across three critiques on $45.2\%$ of $2560$ measurements and $49.6\%$ of $954$ errors. When it changes, repairs and breakages offset one another, contributing $+0.030$ and $-0.016$ for re-attention (Appendix~\ref{app:extra}, Table~\ref{tab:a1}).

\paragraph{The screen in practice} On a certified pair, Corollary~\ref{cor:screen-part} proves that $\textsf{att}$ and $\textsf{rea}$ are dominated by stopping for every $\lambda>0$. A router that recognizes this limit should avoid both actions, yet none of the label-free scores does so. Under the median rule, every score except the random control acts on all $80$ certified pairs but only half of the natural stream. The random control acts on half of each. The protocol-specified rule acts on all items and sends every certified pair to reasoning. Routers trained on the vision tower's outputs behave less uniformly (Table~\ref{tab:routers}). One stops on all $80$ pairs. Another acts on $70$ but selects re-encoding, which the corollary does not rule out, on $65$ of them. The remaining four select a dominated action on between $6$ and $80$ pairs. Five of the six hidden-state routers stop on all $80$. The screen therefore distinguishes policies that avoid provably wasted computation from those that do not.

\paragraph{Limitations} All certified witnesses use one resize-based mechanism in one modality, so broader applications of Proposition~\ref{prop:fiber} remain untested. Our exclusions cover only exact-kernel collisions that can be added to any base image (Remark~\ref{rem:quant}). The construction requires a fixed-point resampler whose filter and resize ratio yield an integer kernel vector that fits within the pixel range. Among the bicubic configurations we resolve, only integer resize ratios are constructible, and the construction is unavailable when a deployment forwards images at native resolution \citep{navit,qwen2vl}. An adversary can choose an eligible input resolution, but natural images rarely arrive at one. We therefore resized the benchmark images into the eligible regime, where only a small minority supported a legible mark. We do not measure how often natural errors are interface-limited, and identifying such errors would require mining natural failures. Every certificate also assumes that all image-dependent interface fields have been enumerated correctly (Appendix~\ref{app:complete}). Finally, because the primary endpoint is null, we do not ablate the components of \IAS{}.

\section{Conclusion}

Perceptual failures are usually inferred from behavior. When pre-rounding resize is an exact integer map, a certified pair makes one class provable: members with different correct answers have identical interface states, so no verifier or language-side repair can beat chance. From the resize configuration, \AF{} either constructs such a pair or certifies its absence within the specified family. Certified pairs also expose routers that waste compute on re-attention or further reasoning. Natural items exhibit routing headroom, but no tested interface-only router improves over stopping. The fiber ceiling explains how this can occur: a router cannot distinguish inputs within a fiber (Proposition~\ref{prop:fiber}). Extending certification beyond fixed-point resampling is the next step.\looseness=-1

\bibliography{references}

\appendix
\renewcommand{\topfraction}{0.8}
\section{Proofs}\label{app:proofs}

\begin{definition}[Language-side class]\label{def:ls}
Fix a question and candidate step $(Q,R)$. A \emph{language-side intervention} is a measurable function $\psi(\bz,Q,R,\omega)$, where the randomness $\omega$ is drawn independently of $(\bx,\bz)$. A finite adaptive procedure also belongs to this class because its final output remains a measurable function of $(\bz,Q,R,\omega)$. Re-supplying the identical image to a deterministic processor is language-side, and $\textsf{enc}$ is not because it recomputes $\bz$.
\end{definition}

\begin{proof}[Proof of Proposition~\ref{prop:sharp}] Let $\nu=\mu_0+\mu_1$, write $q_0$ and $q_1$ for the corresponding densities, and define $A^\star=\{q_0>q_1\}$. A deterministic verifier whose positive decision region is $A$ has $\mathrm{BA}={\tfrac12\bigl(1+\mu_0(A)-\mu_1(A)\bigr)}$, and for every measurable $A$,
\[
\mu_0(A)-\mu_1(A)
=\int_A(q_0-q_1)\,d\nu
\le\int_{A^\star}(q_0-q_1)\,d\nu
=\TV(\mu_0,\mu_1),
\]
with equality when $A=A^\star$. Randomized rules are mixtures of deterministic rules and cannot exceed this maximum, which proves (b). If the interface is deterministic and $\bz_0\neq\bz_1$, then $\mu_0$ and $\mu_1$ are distinct point masses and $\TV(\mu_0,\mu_1)=1$. For (a), sufficiency holds because a verifier, before or after a language-side action, is a Markov kernel on $(\bz,Q,R)$, so $\mu_0=\mu_1$ gives $p_0=p_1$, and necessity follows from (b) because $\TV(\mu_0,\mu_1)>0$ whenever $\mu_0\neq\mu_1$.
\end{proof}

\begin{proof}[Proof of Corollary~\ref{cor:ceiling-part}] Partition the population into the certified fraction $\kappa$ and the remainder. Every verifier attains exactly $\tfrac12$ on the first by Theorem~\ref{thm:cert}(i) and at most $1$ on the second, so the aggregate is at most $\kappa\cdot\tfrac12+(1-\kappa)\cdot1=1-\kappa/2$, with equality when the remainder is answered perfectly.
\end{proof}

\begin{proof}[Proof of Corollary~\ref{cor:screen-part}] By Theorem~\ref{thm:cert}(ii), $\Delta_i(a)=0$ for every $a\in\{\textsf{stop},\textsf{att},\textsf{rea}\}$ on a certified pair, so $g_i(a)=-\lambda c(a)$ for these actions. Because $c(\textsf{stop})=0$, $c(\textsf{att}),c(\textsf{rea})>0$ and $\lambda>0$, we have $g_i(\textsf{att}),g_i(\textsf{rea})<0=g_i(\textsf{stop})$. Within $\Ac$, stopping is optimal exactly when $\Delta_i(\textsf{enc})\le\lambda c(\textsf{enc})$, which is why the screen treats $\textsf{enc}$ separately.
\end{proof}

\begin{proof}[Proof of Theorem~\ref{thm:cert}]
By Definition~\ref{as:hash}, both members produce the same state $\bs_k$. Assumption~\ref{as:factor} then gives the same conditional law of $\bz$ under both members. Together with Assumption~\ref{as:shared}, this implies that the joint law of $(\bz,Q,R,\omega)$ is identical. By Definition~\ref{def:ls}, every language-side intervention is a measurable function of this tuple, so its push-forward law is also identical.

For any binary rule applied to this common output law, let $p$ be the probability of predicting label $1$. Because the two members have labels $1$ and $0$, $\mathrm{BA}=\tfrac12\bigl(p+(1-p)\bigr)=\tfrac12$, which proves (i). The same calculation holds before and after any language-side action, so $\Delta_i(a)=0$, proving (ii). For (iii), an action that produces a shared post-action state at stage $k$ and preserves Assumption~\ref{as:iface} yields an interface alias at stage $k$, so the same argument gives $\Delta_i(a)=0$.
\end{proof}

\begin{remark}[Which signal the ceiling refers to]
Proposition~\ref{prop:fiber} is stated for a signal $S=\sigma(\bz,Q,R)$ because the bound depends on what the router reads. Let $\bz$ be constant and $Q$ be uniform on $\{Q',Q''\}$, and suppose action $a_1$ gains $1$ under $Q'$ and $0$ under $Q''$, with the gains reversed for $a_2$. Then $V_\Phi=0$, but a policy reading $Q$ gains $1/2$ over the best fixed action. Conversely, in our natural stream the tuple $(\bz,Q,R)$ identifies the item, so $V_{(\bz,Q,R)}=V$ and the ceiling for that signal is vacuous without restricting the router class. \IAS{}, interface distance and the vision-tower routers read $\bz$ alone, while verifier confidence and the hidden-state routers read $(\bz,Q,R)$.
\end{remark}

\begin{proposition}[Non-constructibility bound]\label{prop:construct}
Fix a deployed resize with fixed-point integer coefficient matrix $K$ and a window of $W$ input columns. Let $K_W$ be the restriction of $K$ to that window, and let $L=\ker K_W\cap\mathbb{Z}^{W}$. The construction forms the image difference $\bd=uv^{\!\top}$ inside the window and sets it to zero elsewhere, where $u$ is a binary vector selecting the affected image rows and $v\in L$ supplies the column offsets. We call $v$ \emph{realizable} when $v\neq0$ and $\lVert v\rVert_\infty\le255$. For any basis of $L$ with Gram--Schmidt vectors $b_1^*,\dots,b_n^*$, every nonzero $v\in L$ satisfies $\lVert v\rVert_2\ge\min_i\lVert b_i^*\rVert$ and therefore $\lVert v\rVert_\infty\ge\min_i\lVert b_i^*\rVert/\sqrt{W}$. Consequently, if $\min_i\lVert b_i^*\rVert>255\sqrt{W}$, no realizable perturbation in this rank-one family can be supported on that window.
\end{proposition}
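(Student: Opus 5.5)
The plan is to prove the Gram--Schmidt lower bound on the shortest nonzero lattice vector, convert it to the sup-norm, and then check that the rank-one construction forces its column factor $v$ into $L$. The one subtle point is whether $L$ is actually a lattice with a finite basis.

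\emph{Step 1: $L$ is a lattice.} The set $L=\ker K_W\cap\mathbb{Z}^W$ is a subgroup of $\mathbb{Z}^W$, hence free abelian of some rank $n\le W$. It is also discrete, being contained in $\mathbb{Z}^W$. Therefore it has a basis $b_1,\dots,b_n$, which is linearly independent over $\mathbb{R}$, and the Gram--Schmidt vectors $b_1^*,\dots,b_n^*$ are well defined and nonzero. If $n=0$ there is no nonzero $v$ and the statement is vacuous.

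\emph{Step 2: the shortest-vector bound.} Take a nonzero $v\in L$ and write $v=\sum_{i=1}^n c_ib_i$ with $c_i\in\mathbb{Z}$. Let $j$ be the largest index with $c_j\neq0$. Write each $b_i=b_i^*+\sum_{l<i}\mu_{il}b_l^*$. Then the coefficient of $v$ along $b_j^*$ is exactly $c_j$, since later $b_i$ do not appear and earlier ones lie in $\mathrm{span}(b_1^*,\dots,b_{j-1}^*)$. By orthogonality of the $b_i^*$,
\[
\lVert v\rVert_2\ \ge\ \lvert c_j\rvert\,\lVert b_j^*\rVert\ \ge\ \lVert b_j^*\rVert\ \ge\ \min_i\lVert b_i^*\rVert,
\]
where the middle inequality uses $\lvert c_j\rvert\ge1$ because $c_j$ is a nonzero integer. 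This is the main step, though it is the standard argument. Since this bound holds for any basis, no lattice reduction is needed for validity; reduction only makes the bound sharper.

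\emph{Step 3: norm conversion.} For $v\in\mathbb{R}^W$ we have $\lVert v\rVert_2\le\sqrt{W}\,\lVert v\rVert_\infty$. Combining this with Step 2 gives $\lVert v\rVert_\infty\ge\min_i\lVert b_i^*\rVert/\sqrt W$. Hence if $\min_i\lVert b_i^*\rVert>255\sqrt W$, every nonzero $v\in L$ has $\lVert v\rVert_\infty>255$, and no realizable $v$ exists.

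\emph{Step 4: link to the construction family.} We must check that the family of differences $\bd=uv^{\!\top}$ with $u\neq0$ binary yields identical resized outputs only when $v\in L$. This is where I would expect a gap to hide.
\begin{itemize}
\item By separability, the pre-rounding output difference is $K_{\mathrm{vert}}\,\bd\,K_W^{\!\top}=(K_{\mathrm{vert}}u)(K_Wv)^{\!\top}$, where $K_{\mathrm{vert}}$ is the vertical resize. If $K_Wv=0$ this difference vanishes.
\item Conversely, the construction defines its offsets by requiring $v\in L$, so non-constructibility within the family is exactly the nonexistence of a realizable element of $L$.
\item The pixel-range requirement $\lVert v\rVert_\infty\le255$ holds because each entry of $\bd$ (with $u_r=1$) is a difference of two 8-bit values.
\end{itemize}
Step 3 then rules out every such $\bd$ supported on the window.
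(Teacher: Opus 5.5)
Your proof is correct and follows the paper's own argument, with the standard step proved rather than cited. The paper cites the fact that every nonzero lattice vector is at least as long as the shortest Gram--Schmidt vector of any basis (your Step 2), then applies the same $\ell_2$-to-$\ell_\infty$ conversion (your Step 3). Step 4 is not needed, since the family is defined by $v\in L$. Its lead-in ``only when'' claim is also false in general: output rounding can make a non-kernel $v$ collide on particular base images (Remark~\ref{rem:quant}), which is why the exclusion is stated only for the exact-kernel family.
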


The bound follows because the shortest nonzero vector of a lattice is at least as long as the shortest Gram--Schmidt vector of any of its bases \citep{lll}. It is basis-dependent, window-local and one-sided. Exceeding the threshold certifies impossibility, while falling below it establishes only that a search may succeed.

\begin{proof}[Arbitrary-rank extension of Proposition~\ref{prop:construct}]
Assume the same fixed-point operator $K_W$ acts along both image axes, and let $\bd$ be an integer difference whose support the window covers along both axes. The deployed resampler applies $K_W$ one axis at a time, so a difference that collides on every unclipped base is annihilated by the first pass and therefore satisfies $K_W\bd K_W^{\!\top}=0$. We bound this larger exact-kernel family. The rank-one family $uv^{\!\top}$ does not exhaust the solutions. For example, the integer kernel of $G=\left[\begin{smallmatrix}160&96&0\\0&96&160\end{smallmatrix}\right]$ is spanned by $(3,-5,3)$, yet $\bd=\left[\begin{smallmatrix}0&-1&0\\1&0&1\\0&-1&0\end{smallmatrix}\right]$ has rank $2$ and sup-norm $1$ and satisfies $G\bd G^{\!\top}=0$. Define
\[
\tau_\infty(L)=\min_{v\in L\setminus\{0\}}\lVert v\rVert_\infty
\quad\text{and}\quad
M=\max_a\lVert\operatorname{row}_a(K_W)\rVert_1,
\]
and let $Y=K_W\bd$. Since $YK_W^{\!\top}=0$, the transpose of every row of $Y$ lies in $L$. If $Y=0$, every column of $\bd$ lies in $L$, so $\lVert\bd\rVert_\infty\ge\tau_\infty(L)$. If $Y\neq0$, a nonzero row $Y_a=\operatorname{row}_a(K_W)\bd$ satisfies $\tau_\infty(L)\le\lVert Y_a\rVert_\infty\le\lVert\operatorname{row}_a(K_W)\rVert_1\lVert\bd\rVert_\infty\le M\lVert\bd\rVert_\infty$. Because $M\ge1$, both cases give $\lVert\bd\rVert_\infty\ge\tau_\infty(L)/M$.

Substituting the lower bound of Proposition~\ref{prop:construct} for $\tau_\infty(L)$, the resulting bounds at $1.5\times$ are $3.3\times10^5$ for bicubic, $2.9\times10^5$ for bilinear, and $3.1\times10^5$ for Lanczos, all above $255$, so these configurations admit no nonzero window-supported integer collision of any rank in the exact kernel. In the principal case of Theorem~\ref{thm:poly}, every row of $Y$ and, when $Y=0$, every column of an interior $\bd$ is an interior kernel vector, so $\tau_\infty(L)$ can be replaced by $\beta(K)$. At $1.5\times$, this gives $5.7\times10^{3}$, $1.2\times10^{6}$ and $964$ for any interior support (Table~\ref{tab:beta}). The exclusion at $2.5\times$ and the Lanczos exclusions in Table~\ref{tab:kernels} apply only to the rank-one family.
\end{proof}

\begin{remark}[Exact kernel and output quantization]\label{rem:quant}
A collision need not arise from exact annihilation. The deployed resize rounds and clips its integer accumulator, so distinct pre-quantization values can produce the same 8-bit output, depending on the base image. A perturbation annihilated by the first resize pass instead collides on every base for which it is realizable without input clipping. Our non-constructibility results rule out only exact-kernel perturbations within the stated construction family.
\end{remark}

\begin{proof}[Proof of Proposition~\ref{prop:fiber}]
For (i), let $p_\pi(a\mid S)$ be the probability that $\pi(S,\omega)$ selects action $a$. Because $\omega$ is independent of the item, $\Ex[g(\pi)\mid S]=\sum_a p_\pi(a\mid S)\Ex[g(a)\mid S]\le\max_a\Ex[g(a)\mid S]$. Taking expectations and subtracting $\max_a\Ex[g(a)]$ gives the bound. Because $\Ac$ is finite, a fixed tie-breaking rule makes $\pi_S(S)\in\argmax_a\Ex[g(a)\mid S]$ measurable, and this policy attains equality.

For (ii),
\[
\max_a\Ex[g(a)]
\le
\Ex\!\left[\max_a\Ex[g(a)\mid S]\right]
\le
\Ex\!\left[\max_a g(a)\right].
\]
The first inequality holds because constant policies read any signal, and the second follows from the conditional expectation of the pointwise maximum. If $S'=h(S)$, the tower property and the same inequality applied conditionally on $S'$ give $V_{S'}\le V_S$, the statement that garbling a signal cannot increase its value \citep{blackwell1953}.

For (iii), $V-V_S=\Ex\!\left[\max_a g(a)-g(\pi_S(S))\right]$ has a nonnegative integrand, so $V_S=V$ exactly when $\pi_S(S)$ selects a per-item optimal action almost surely.
\end{proof}

\begin{proof}[Proof of Theorem~\ref{thm:poly}]
\emph{Indexing.} Number the output columns so that column $j=qn+\varphi$, with $n\in\mathbb{Z}$ and phase $\varphi\in\{0,\dots,q-1\}$, reads input columns $pn+o_\varphi+t$ for $t=0,\dots,T_\varphi-1$ with weights $k_\varphi[t]$. Away from the border, the offsets $o_\varphi$ and the weights depend only on $\varphi$, which is the periodicity used in Section~\ref{sec:af}. Write each input index as $pn+o_\varphi+t=p(n+e)+s$ with residue $s\in\{0,\dots,p-1\}$, and set $h_{\varphi s}[e]=k_\varphi[pe+s-o_\varphi]$, taken as zero outside the weight range. With $v_s[n]=v[pn+s]$, the equation $(Kv)_{qn+\varphi}=0$ reads
\[
\sum_{s=0}^{p-1}\sum_{e}h_{\varphi s}[e]\,v_s[n+e]=0\qquad\text{for all }n\in\mathbb{Z}\text{ and }\varphi .
\]
Define the Laurent polynomials $\tilde H_{\varphi s}(\zeta)=\sum_e h_{\varphi s}[e]\zeta^{-e}$ and $\hat v_s(\zeta)=\sum_n v_s[n]\zeta^{n}$. The left-hand side is the coefficient of $\zeta^{n}$ in $\sum_s\tilde H_{\varphi s}(\zeta)\hat v_s(\zeta)$, so $Kv=0$ on the interior if and only if $\tilde H(\zeta)\hat v(\zeta)=0$. Multiplying row $\varphi$ by a suitable power of $\zeta$ turns $\tilde H$ into the polynomial matrix $H\in\mathbb{Z}[\zeta]^{q\times p}$ of Section~\ref{sec:af} without changing the solution set, and a solution $\hat v$ has finite support if and only if $v$ does.

\emph{Existence.} Over the field $\mathbb{Q}(\zeta)$, $H$ has rank at most $q$, so its null space has dimension at least $p-q\ge1$. Any nonzero rational null vector, multiplied by the least common multiple of its denominators, is a nonzero vector in $\mathbb{Z}[\zeta]^{p}$ annihilated by $H$. The set of all such vectors is a $\mathbb{Z}[\zeta]$-submodule of $\mathbb{Z}[\zeta]^{p}$ that spans the null space, so its rank equals that dimension. Shifting a solution into the interior of the image gives a finitely supported integer kernel vector of $K$, provided the interior is at least as wide as its support.

\emph{Principal case.} Let $p=q+1$ and suppose $H$ has full row rank, so the null space over $\mathbb{Q}(\zeta)$ is one-dimensional. Let $r_s=(-1)^{s}\det H^{(s)}$, where $H^{(s)}$ deletes column $s$. For each row $\varphi$, $\sum_s H_{\varphi s}r_s$ is the Laplace expansion of the determinant of the $(q+1)\times(q+1)$ matrix obtained by prepending row $\varphi$ of $H$ to $H$. That matrix has a repeated row, so $Hr=0$, and $r\neq0$ because some maximal minor is nonzero. Let $\gamma=\gcd(r_0,\dots,r_{p-1})$ in $\mathbb{Z}[\zeta]$, including integer content, and $w=r/\gamma$, so that $\gcd(w_0,\dots,w_{p-1})=1$. Now let $\hat v\in\mathbb{Z}[\zeta]^{p}$ be any integer solution. Since the null space is one-dimensional, $\hat v=(\xi_1/\xi_2)\,w$ with $\xi_1,\xi_2\in\mathbb{Z}[\zeta]$ coprime and $\xi_2\neq0$. Then $\xi_2$ divides $\xi_1 w_s$ for every $s$, so by Gauss's lemma $\xi_2$ divides every $w_s$ and hence their greatest common divisor $1$. Thus $\xi_2=\pm1$ and $\hat v=\xi w$ with $\xi=\pm\xi_1$. For the bound, fix $s$ with $w_s\neq0$. The highest-order coefficient of $\xi w_s$ is $\mathrm{lc}(\xi)\,\mathrm{lc}(w_s)$ and its lowest-order nonzero coefficient is $\mathrm{tc}(\xi)\,\mathrm{tc}(w_s)$. Both $\mathrm{lc}(\xi)$ and $\mathrm{tc}(\xi)$ are nonzero integers, so $\lVert\xi w_s\rVert_\infty\ge\max\{\lvert\mathrm{lc}(w_s)\rvert,\lvert\mathrm{tc}(w_s)\rvert\}$. Taking the maximum over $s$ gives $\lVert v\rVert_\infty\ge\beta(K)$. Moving the support multiplies $\hat v$ by a power of $\zeta$, which changes neither coefficient. Finally, $\hat v=w$ is itself a solution of sup-norm $\lVert w\rVert_\infty$.
\end{proof}

\begin{remark}[Border columns]\label{rem:border}
Within one filter support of the left or right image edge, the resampler clips and renormalizes its weights, so the periodic description and the bound $\beta(K)$ do not apply there. We call an input column \emph{interior} when every output column that reads it has periodic weights. A carrier that touches the border must be checked with Proposition~\ref{prop:construct} on that window. \AF{} places carriers on interior columns.
\end{remark}

\begin{table}[htbp]
\centering
\footnotesize
\caption{\textbf{Window-free bounds in the principal case.} $\beta(K)$ bounds every interior integer kernel vector from below at any support width, $\lVert w\rVert_\infty$ is the sup-norm of the generator, and $\beta(K)/M$ bounds interior image differences of any rank, where $M$ is the largest absolute row sum of $K$. Orange entries exceed $255$ and exclude a realizable perturbation. Blue entries are attained by the generator. The box operator at $1.5\times$ fails the bit-exact reconstruction check of Table~\ref{tab:kernels}.}
\label{tab:beta}
\vspace{1pt}
\setlength{\tabcolsep}{2pt}
\begin{tabular*}{\textwidth}{@{\extracolsep{\fill}}l ccc ccc ccc@{}}
\thead
& \multicolumn{3}{c}{\thd{\boldmath$2\times$}} & \multicolumn{3}{c}{\thd{\boldmath$1.5\times$}} & \multicolumn{3}{c}{\thd{\boldmath$4/3\times$}} \\
\tpartial{\cmidrule(lr){2-4}\cmidrule(lr){5-7}\cmidrule(l){8-10}}
\thd{Filter} & $\beta(K)$ & $\lVert w\rVert_\infty$ & $\beta(K)/M$ & $\beta(K)$ & $\lVert w\rVert_\infty$ & $\beta(K)/M$ & $\beta(K)$ & $\lVert w\rVert_\infty$ & $\beta(K)/M$ \\
\theadend
Bicubic & \tkey{cCert}{$\mathbf{3}$} & \tkey{cCert}{$\mathbf{3}$} & \tmut{$<1$} & \tkey{cExcl}{$2.8{\times}10^{10}$} & $6.7{\times}10^{12}$ & \tkey{cExcl}{$5.7{\times}10^{3}$} & \tkey{cExcl}{$1.2{\times}10^{16}$} & $7.1{\times}10^{18}$ & \tkey{cExcl}{$2.4{\times}10^{9}$} \\
\thairx{1.2pt}
Bilinear & \tkey{cCert}{$\mathbf{3}$} & \tkey{cCert}{$\mathbf{3}$} & \tmut{$<1$} & \tkey{cExcl}{$5.2{\times}10^{12}$} & $5.2{\times}10^{12}$ & \tkey{cExcl}{$1.2{\times}10^{6}$} & \tkey{cExcl}{$2.3{\times}10^{12}$} & $2.3{\times}10^{12}$ & \tkey{cExcl}{$5.5{\times}10^{5}$} \\
\thairx{1.2pt}
Box & \tkey{cCert}{$\mathbf{1}$} & \tkey{cCert}{$\mathbf{1}$} & \tmut{$<1$} & \multicolumn{3}{c}{\tmut{\emph{gate failed}}} & \tkey{cCert}{$\mathbf{1}$} & \tkey{cCert}{$\mathbf{1}$} & \tmut{$<1$} \\
\thairx{1.2pt}
Lanczos & \tkey{cExcl}{$6.3{\times}10^{4}$} & $1.9{\times}10^{6}$ & \tmut{$<1$} & \tkey{cExcl}{$5.6{\times}10^{9}$} & $6.8{\times}10^{12}$ & \tkey{cExcl}{$964$} & \tkey{cExcl}{$1.2{\times}10^{15}$} & $2.3{\times}10^{19}$ & \tkey{cExcl}{$2.1{\times}10^{8}$} \\
\bottomrule
\end{tabular*}
\setlength{\tabcolsep}{6pt}
\end{table}

\paragraph{Computed bounds} Table~\ref{tab:beta} evaluates Theorem~\ref{thm:poly} on the reconstructed operators at the three principal ratios. We compute the maximal minors and their greatest common divisor in exact integer arithmetic and confirm that the operator maps the generator $w$ exactly to zero. In every case the interior columns are exactly periodic and $H$ has full row rank. Where $\beta(K)$ exceeds $255$, no realizable interior kernel vector exists at any support width, and where $\beta(K)/M$ also exceeds $255$, no interior image difference of any rank lies in the exact kernel.

\section{The \AF{} procedure}\label{app:algo}

Algorithm~\ref{alg:af} separates \AF{} into two procedures. \textsc{Audit} runs once per deployed configuration without model access or image content. It accepts the reconstructed operator $K$ only after reproducing the deployed resize bit for bit. When a pipeline applies multiple resizes to the same pixels, it stacks their operators so that the resulting kernel is the intersection of their individual kernels, and it searches the integer kernel with LLL basis reduction \citep{lll}. \textsc{Construct} then builds a pair whose desired fact values are $\ell_0$ and $\ell_1$. A \textsc{not constructible} verdict applies to the rank-one family $uv^{\!\top}$.

\begin{algorithm}[t]
\caption{\AF{} configuration audit and pair construction}
\label{alg:af}
\small
\begin{algorithmic}[1]
\Procedure{Audit}{$\Phi,W$}
  \State Enumerate the image-dependent fields $\mathcal{F}$ emitted by $\Phi$
  \State Perturb each field separately and verify that the digest changes
  \State Reconstruct the integer resize operators from their fixed-point coefficients
  \State Stack operators applied to the same pixels into $K$
  \If{$K$ does not reproduce the deployed resizes bit for bit}
    \State \Return \textsc{audit failure}
  \EndIf
  \State Compute and LLL-reduce an exact $\mathbb{Z}$-basis $B$ of $L=\ker K_W\cap\mathbb{Z}^{W}$
  \If{$L=\{0\}$}
    \State \Return \textsc{not constructible}
  \EndIf
  \State Compute the Gram--Schmidt vectors $b_1^*,\dots,b_n^*$ of $B$
  \If{$\min_i\lVert b_i^*\rVert_2>255\sqrt{W}$}
    \State \Return \textsc{not constructible} \Comment{Proposition~\ref{prop:construct}}
  \EndIf
  \State Search $L$ for $v\neq0$ with $\lVert v\rVert_\infty\le255$
  \If{no such $v$ is found}
    \State \Return \textsc{unresolved}
  \EndIf
  \State \Return $(\mathcal{F},K,v)$
\EndProcedure
\Statex
\Procedure{Construct}{$\bx,\ell_0,\ell_1,\mathcal{F},K,v$}
  \State Choose an integer amplitude $m\ge1$ with $m\lVert v\rVert_\infty\le255$
  \State Place copies of $v$ at valid column offsets to form a carrier $\tilde v$ satisfying $K\tilde v=0$
  \State Choose binary row selectors $u_0$ and $u_1$ that render fact values $\ell_0$ and $\ell_1$ without clipping
  \State Form $\bx_b$ by adding $m\tilde v$ to $\bx$ on the rows selected by $u_b$, for $b\in\{0,1\}$
  \If{$\Phi(\bx_0)$ and $\Phi(\bx_1)$ have equal digests over $\mathcal{F}$}
    \State \Return the certified pair and its certification record
  \EndIf
  \State \Return \textsc{fail}
\EndProcedure
\end{algorithmic}
\end{algorithm}

\paragraph{Construction settings} For the $896\!\to\!448$ bicubic resize, the shortest integer kernel vector is $v=(1,-3,3,-1)$, tiled across valid column offsets to form the full-width carrier $\tilde v$. Every certified pair in Table~\ref{tab:e0} uses amplitude $m=20$, giving a maximum pixel offset of $60$. The tiling pipeline uses a vector of sup-norm $7$, giving a maximum offset of $140$. The fact values are $\ell_0=2$ and $\ell_1=3$ visible horizontal bands, so $\bx_1-\bx_0=m\tilde v$ on the added rows and is zero elsewhere.

\paragraph{Shapes beyond bands} Any image difference $\bd$ whose rows lie in $\ker K$ satisfies $K\bd K^{\!\top}=0$. Figure~\ref{fig:teaser} applies this row-wise construction to digit glyphs that visibly depict a $3$ and an $8$. The resulting difference has rank $18$ and lies outside the rank-one family bounded by Proposition~\ref{prop:construct}.

\section{Interface completeness checklist}\label{app:complete}

For each scored prompt, the certificate hashes every field returned by the deployed processor (Table~\ref{tab:fields}). Arrays are serialized with their field name, data type, byte order, shape and raw bytes, while scalars and sequences use a canonical serialization. Fields are ordered by name before being combined into one SHA-256 digest. Downstream image-dependent quantities, including projected visual tokens, position identifiers and expanded image placeholders, are determined by the hashed bundle and the shared non-image inputs.

\paragraph{Validation} Perturbing one element in each enumerated field changed the digest for all five Qwen2.5-VL fields and all three LLaVA-1.5 fields. Re-hashing five certified pairs per target with the complete scored prompt $(Q,R)$ left every pair identical across all fields.

\begin{table}[htbp]
\centering
\footnotesize
\caption{\textbf{Fields included in the certificate digest.} Shapes are shown for the audited scored prompt. Image dependence indicates whether a field varies with pixel content, only with image geometry, or not with the image.}
\label{tab:fields}
\vspace{1pt}
\newcommand{\fld}[1]{\textsf{#1}}
\begin{tabular*}{\textwidth}{@{\extracolsep{\fill}}lllll@{}}
\thead
& & \multicolumn{2}{c}{\thd{Shape}} & \\
\tpartial{\cmidrule(lr){3-4}}
\thd{Processor field} & \thd{Data type} & \thd{Qwen2.5-VL-7B} & \thd{LLaVA-1.5-7B} & \thd{Image dependence} \\
\theadend
\fld{pixel\_values} & float32 & $(1024,\,1176)$ & $(1,\,3,\,336,\,336)$ & content \\
\fld{image\_grid\_thw} & int64 & $(1,\,3)$ & \tmut{not emitted} & geometry \\
\fld{input\_ids} & int64 & $(1,\,321)$ & $(1,\,617)$ & \tmut{none} \\
\fld{attention\_mask} & int64 & $(1,\,321)$ & $(1,\,617)$ & \tmut{none} \\
\fld{mm\_token\_type\_ids} & int64 & $(1,\,321)$ & \tmut{not emitted} & \tmut{none} \\
\bottomrule
\end{tabular*}
\end{table}

\section{Alias construction in detail}\label{app:construct}

\paragraph{Windowed criterion} We apply Proposition~\ref{prop:construct} to $32$-column windows at stride $16$ and take the minimum bound over the windows. A window of $W$ columns at stride $t$ contains every support of width at most $W-t+1$, so the $32$-column scan certifies supports up to $17$ columns wide and the $256$-column scan up to $241$. Widening the windows to $64$, $128$ or $256$ columns changes no verdict among the $27$ filter-and-ratio configurations that pass the bit-exact reconstruction check. Clearing denominators in a basis of the rational kernel may span only a strict subset of $L$, so we obtain a $\mathbb{Z}$-basis using integer column operations of determinant $\pm1$, shorten it with LLL and evaluate the Gram--Schmidt bound in exact rational arithmetic.

\paragraph{Resize results} We reconstructed $K$ for $28$ resampling-filter and ratio combinations (Table~\ref{tab:kernels}). The reconstruction matched the library bit for bit in $27$ cases, and the criterion returned a definitive verdict in $26$. Unlike bicubic, Lanczos is ruled out through $3.01\times$ but succeeds at $6\times$ and $8\times$. Each positive result is verified by resizing a constructed pair, and Table~\ref{tab:breadth} reports the deployed processors that use other resampling filters. The $4\times$ Lanczos case remains unresolved because the bound falls below $255$ but the reduced basis yields no realizable vector, and the box operator at $1.5\times$ fails the bit-exact reconstruction gate.

\begin{table}[htbp]
\centering
\footnotesize
\caption{\textbf{Constructibility by resampling filter and resize ratio.} Bold numbers give the sup-norm of a constructed integer kernel vector. Entries of the form $\ge B$ give certified lower bounds. Because each such bound exceeds $255$, the rank-one construction is excluded. At $1.5\times$ and for Lanczos at $2\times$, Theorem~\ref{thm:poly} makes the exclusion independent of the window (Table~\ref{tab:beta}). \emph{Unresolved} means that the bound falls below $255$ but no realizable vector was found. \emph{Gate failed} means that the reconstructed operator did not reproduce the deployed library bit for bit.}
\label{tab:kernels}
\vspace{1pt}
\setlength{\tabcolsep}{2pt}
\begin{tabular*}{\textwidth}{@{\extracolsep{\fill}}l@{\hspace{1.2em}} *{7}{>{\centering\arraybackslash}p{0.108\textwidth}}@{}}
\thead
& \multicolumn{7}{c}{\thd{Resize ratio}} \\
\tpartial{\cmidrule(l){2-8}}
\thd{Filter}
& \thd{\boldmath$1.5\times$}
& \thd{\boldmath$2\times$}
& \thd{\boldmath$3\times$}
& \thd{\boldmath$3.01\times$}
& \thd{\boldmath$4\times$}
& \thd{\boldmath$6\times$}
& \thd{\boldmath$8\times$} \\
\theadend
Bicubic
& \makebox[0pt][c]{\tkey{cExcl}{$\ge 1.6{\times}10^{12}$}}
& \tkey{cCert}{$\mathbf{3}$}
& \tkey{cCert}{$\mathbf{13}$}
& \makebox[0pt][c]{\tkey{cExcl}{$\ge 735$}}
& \tkey{cCert}{$\mathbf{5}$}
& \tkey{cCert}{$\mathbf{2}$}
& \tkey{cCert}{$\mathbf{2}$} \\
\thairx{1.2pt}
Bilinear
& \makebox[0pt][c]{\tkey{cExcl}{$\ge 1.2{\times}10^{12}$}}
& \tkey{cCert}{$\mathbf{3}$}
& \tkey{cCert}{$\mathbf{2}$}
& \tkey{cCert}{$\mathbf{1}$}
& \tkey{cCert}{$\mathbf{1}$}
& \tkey{cCert}{$\mathbf{1}$}
& \tkey{cCert}{$\mathbf{1}$} \\
\thairx{1.2pt}
Box
& \tmut{\emph{gate failed}}
& \tkey{cCert}{$\mathbf{1}$}
& \tkey{cCert}{$\mathbf{1}$}
& \tkey{cCert}{$\mathbf{1}$}
& \tkey{cCert}{$\mathbf{1}$}
& \tkey{cCert}{$\mathbf{1}$}
& \tkey{cCert}{$\mathbf{1}$} \\
\thairx{1.2pt}
Lanczos
& \makebox[0pt][c]{\tkey{cExcl}{$\ge 1.8{\times}10^{12}$}}
& \makebox[0pt][c]{\tkey{cExcl}{$\ge 4.8{\times}10^{5}$}}
& \makebox[0pt][c]{\tkey{cExcl}{$\ge 1.6{\times}10^{3}$}}
& \makebox[0pt][c]{\tkey{cExcl}{$\ge 2.0{\times}10^{3}$}}
& \tmut{\emph{unresolved}}
& \tkey{cCert}{$\mathbf{35}$}
& \tkey{cCert}{$\mathbf{16}$} \\
\bottomrule
\end{tabular*}
\setlength{\tabcolsep}{6pt}
\end{table}

\paragraph{Configuration screen} Table~\ref{tab:screen} applies the criterion to $20$ declared bicubic-resize configurations, each a released processor's resize at a stated input and output size. Configurations with the same resampling filter and input-output geometry share the same integer operator. Seventeen configurations across four resize ratios are constructible, and three across two ratios are excluded within the rank-one family. Four cases use declared geometries that differ from the processors' default behavior (Table~\ref{tab:breadth}), so their verdicts should not be interpreted as certificates for the default pipelines.\looseness=-1

\begin{table}[htbp]
\centering
\footnotesize
\caption{\textbf{Screen of $20$ declared bicubic-resize configurations.} The bound column gives the lower bound of Proposition~\ref{prop:construct} on $\lVert v\rVert_\infty$, and a value above $255$ excludes the construction, and at $1.5\times$ the exclusion holds for every interior support (Theorem~\ref{thm:poly}). For constructible ratios, the witness column gives the smallest $\lVert v\rVert_\infty$ of a kernel vector found. A dagger marks a configuration evaluated at a declared geometry that differs from the processor's default, so its verdict does not apply to the default pipeline.}
\label{tab:screen}
\vspace{2pt}
\newcommand{\cfg}[2]{\mbox{#1\,{\scriptsize\color{cPend}#2}}}
\newcommand{\dg}{\smash{\textcolor{cInk}{$^\dagger$}}}
\newcommand{\sep}{\hskip0.4em\cleaders\hbox to 0.3em{\hss\textcolor{cPend}{$\cdot$}\hss}\hskip0.3em\hskip0.4em plus 0.2em\relax}
\setlength{\tabcolsep}{3pt}
\begin{tabular*}{\textwidth}{@{} r @{\hspace{1.0em}} c @{\hspace{1.0em}} c @{\extracolsep{\fill}\hspace{1.2em}} >{\raggedright\arraybackslash}p{0.70\textwidth} @{}}
\thead
\multicolumn{1}{@{}l@{\hspace{1.0em}}}{\thd{Ratio}}
& \thd{Bound}
& \thd{Witness}
& \thd{Configurations}\thdsub{\;\,model and output side in pixels} \\
\theadend
\tgroupcert{4}{Constructible}{integer ratios, $17$ configurations}
$2\times$ & \tmut{$0.71$} & \tkey{cCert}{$\phantom{1}3$}
& \cfg{CLIP}{224}\sep\cfg{PaliGemma}{224}\sep\cfg{Qwen2.5-VL}{224}\sep\cfg{LLaVA-1.5}{336}\sep\cfg{Phi-3.5-V}{336\dg}\sep\cfg{SigLIP}{384}\sep\cfg{SmolVLM}{384\dg}\sep\cfg{InternVL thumbnail}{448}\sep\cfg{Qwen2.5-VL}{448}\sep\cfg{Pixtral}{512}\sep\cfg{Cambrian}{576} \\
\tgreyrule{1.6pt}
$3\times$ & \tmut{$4.17$} & \tkey{cCert}{$13$}
& \cfg{BLIP-2}{224}\sep\cfg{SigLIP}{384}\sep\cfg{InternVL tile}{448} \\
\tgreyrule{1.6pt}
$4\times$ & \tmut{$1.25$} & \tkey{cCert}{$\phantom{1}5$}
& \cfg{CLIP}{224}\sep\cfg{LLaVA-1.5}{336} \\
\tgreyrule{1.6pt}
$6\times$ & \tmut{$0.59$} & \tkey{cCert}{$\phantom{1}2$}
& \cfg{Molmo}{336} \\
\tgroupsep
\tgroupexcl{4}{Excluded}{non-integer ratios, $3$ configurations}
$1.5\times$ & \tkey{cExcl}{$1.6{\times}10^{12}$} & \tmut{none}
& \cfg{Fuyu}{300\dg}\sep\cfg{Idefics2}{378\dg} \\
\tgreyrule{1.6pt}
$2.5\times$ & \tkey{cExcl}{$1.8{\times}10^{4\phantom{0}}$} & \tmut{none}
& \cfg{MiniCPM-V}{448} \\
\bottomrule
\end{tabular*}
\setlength{\tabcolsep}{6pt}
\end{table}

\paragraph{Where collisions can be constructed} Whether \AF{} can construct a collision at a preprocessing stage depends on the deployed configuration, which we determine from its implementation. A collision at resizing identifies information lost before the learned encoder, whereas a collision at token selection would identify information lost by the encoder. We exclude post-encoder pruning because global attention allows retained token values to depend on discarded content, violating Assumption~\ref{as:factor}. Across the three target pipelines, only resizing produced certified pairs. Cropping and tiling produced no candidates because the tested pipelines preserve all image regions, and we did not evaluate pre-encoder masking.

\paragraph{Deployment dependence} A resize without downscaling is an identity map and admits no construction. One target resizes only when the image exceeds a pixel budget \citep{qwen2vl}, so an aligned image within that budget passes through unchanged, and lowering the budget introduces downscaling. If the pipeline also appends a thumbnail \citep{internvl15,llavaonevision}, a perturbation must be removed by both resize operations.

\paragraph{Input transport} A certificate applies to the exact $8$-bit pixel arrays received by the processor. Lossy re-encoding during transport, such as JPEG compression, invalidates it unless equality is verified again after decoding. The audited Qwen and LLaVA processors resize $8$-bit values before rescaling or normalization, so no floating-point transformation precedes the certified collision.

\paragraph{Natural images} On a natural image, every affected pixel must remain within $[0,255]$ after adding the carrier. A solid band across the image therefore certified none of the $200$ benchmark images at any tested amplitude. Because headroom is required only on affected rows, we retain eligible rows and allow the bands, which we call marks there, to vary in height, on benchmark images resized to $896\times896$ for the deployed $2\times$ regime. None of the $200$ images has a native size at which the processor's scale factor is an integer. At amplitude $20$, $48$ of $200$ images certify with all three marks aligned in one column, and $94$ certify at amplitude $5$. Requiring each mark to occupy a flat, mid-tone region leaves $8$ legible certified pairs at amplitude $20$. Padding an image onto the required canvas preserves every original pixel and yields $13$ certified pairs, of which $5$ are legible (Figure~\ref{fig:pair}).

\paragraph{Libraries and numeric modes} We resized five bicubic-carrier and five box-carrier pairs at $2\times$ under the resize modes of Pillow and torchvision and compared the outputs exactly. The bicubic carrier $(1,-3,3,-1)$ collides bit for bit under antialiased bicubic and bilinear resampling in both libraries, in 8-bit fixed-point and 32-bit floating-point arithmetic alike, because the $2\times$ weights are exactly representable. The box carrier $(1,-1)$ collides under box, area and non-antialiased bilinear resampling. Neither collides under any other tested mode, including Lanczos. A pair is therefore specific to one operator, whatever the library or number format, while Theorem~\ref{thm:poly} gives every such downscaler kernel vectors of its own. These are the two image backends the audited processors offer. No audited pipeline uses OpenCV or TensorFlow, which we did not evaluate.

\paragraph{Breadth across deployed processors} We applied the same construction to every pipeline whose processor was available (Table~\ref{tab:breadth}). Across four additional pipelines using a single fixed-point resize at an eligible integer ratio, all six attempts per pipeline certified, for $24/24$ overall. The current procedure does not resolve tiled policies, patch-based geometries or floating-point resizing. One tiled pipeline first resizes at $4/3\times$ with a Lanczos filter. Under the fixed-point resampler, $\beta(K)=1.2\times10^{15}$ there, so that stage admits no realizable exact-kernel perturbation on interior support at any rank (Table~\ref{tab:beta}).

\begin{table}[htbp]
\centering
\footnotesize
\caption{\textbf{Breadth across deployed processors.} All six attempts certified on each pipeline with an eligible fixed-point resize. The remaining pipelines fall outside the current procedure's scope.}
\label{tab:breadth}
\vspace{1pt}
\begin{tabular*}{\textwidth}{@{\extracolsep{\fill}}l r @{\extracolsep{0pt}\hspace{1.8em}} r @{\hspace{1.8em}} r @{\extracolsep{\fill}} l@{}}
\thead
& \multicolumn{3}{c}{\thd{Observed resize}} & \\
\tpartial{\cmidrule(lr){2-4}}
\thd{Pipeline} & \thd{Input side} & \thd{Output side} & \thd{Ratio} & \thd{Result} \\
\theadend
\tgroupcert{5}{Eligible fixed-point resizes}{}
SigLIP-B/16-384 & $768$ & $384$ & $2\times$ & \tkey{cCert}{$6/6$ certified} \\
SigLIP-SO400M/14-384 & $768$ & $384$ & $2\times$ & \tkey{cCert}{$6/6$ certified} \\
BLIP-2 OPT-2.7B & $448$ & $224$ & $2\times$ & \tkey{cCert}{$6/6$ certified} \\
Pixtral-12B & $2048$ & $1024$ & $2\times$ & \tkey{cCert}{$6/6$ certified} \\
\tgroupsep
\tgroup{5}{Outside the current procedure's scope}{}
SmolVLM-Instruct & $2048$ & $1536$ & $1.33\times$ & \tmut{tiled policy} \\
Idefics2-8B & $2048$ & $1960$ & $1.04\times$ & \tmut{tiled policy} \\
Qwen2-VL-7B-Instruct & $512$ & $504$ & $1.02\times$ & \tmut{near-identity policy} \\
Phi-3.5-Vision-Instruct & $1344$ & $672$ & $2\times$ & \tmut{floating-point resize} \\
Fuyu-8B & \multicolumn{3}{c}{\tmut{no resize}} & \tmut{patch geometry} \\
\bottomrule
\end{tabular*}
\end{table}

\section{Instrument validation}\label{app:e0}

Table~\ref{tab:e0} summarizes the end-to-end validation. On Qwen2.5-VL-7B, all $15$ constructed pairs certified under a $2\times$ downscale, while none of the $15$ matched non-kernel controls, the $15$ identity-resize controls or the $15$ non-kernel controls under identity resize did. The identity-resize controls keep the stock pixel budget, so nothing is downscaled. For the $8$ pairs with available logits, member differences at the decision position were exactly zero and all matched controls were nonzero. Without adapting the construction, all $10$ LLaVA-1.5-7B pairs certified with zero logit difference and all $10$ controls separated. VisualPRM-8B processes a tiled view and a thumbnail, so the construction uses the intersection of the two resize kernels. All $20$ pairs collided across the complete interface and all $20$ controls separated. We omit its logit gap because the checkpoint returned non-finite language-model logits under every tested hardware and precision configuration. Across $72$ rounded real-valued null-space candidates spanning downscale factors from $1.41\times$ to $8\times$, roughly $1\%$ of the $602{,}112$ resized values remained unequal, about $6000$ mismatches per tensor, and no candidate certified. In total, the $147$ negative controls comprise $45$ non-kernel patterns across the three targets, $30$ identity-resize attempts and these $72$ rounded candidates.

On all $80$ certified pairs of the routing screen, a fixed deterministic critique produced pair-balanced accuracy of exactly $1/2$. Re-attention and further reasoning use independently sampled critiques, so their finite-sample outcomes can vary around $1/2$ even though Theorem~\ref{thm:cert} guarantees identical output laws. Re-encoding lies outside the language-side class, and its gain was positive on $13$ pairs, zero on $60$ and negative on $7$. Near-chance accuracy alone is not evidence of a collision. The identity-resize controls have pair-balanced accuracies from $0.4975$ to $0.5028$ but nonzero member logit differences.

\section{Actions and cost accounting}\label{app:actions}

\begin{table}
\centering
\fontsize{7.4}{8.6}\selectfont
\caption{\textbf{Action set and diagnostic controls.} The policy actions use the path-matched costs shown below. The placebo critique is the path-matched baseline, the ten-crop oracle bounds crop-based re-encoding, and the same-interface re-encode isolates re-encoding overhead.}
\label{tab:actions}
\vspace{1pt}
\setlength{\tabcolsep}{3pt}
\newcommand{\act}[2]{\textsf{#1}\hspace{0.5em}\tmut{#2}}
\begin{tabular*}{\textwidth}{@{\extracolsep{\fill}}l l c l@{}}
\thead
\thd{Action} & \thd{Mechanism} & \thd{Changes $\bz$} & \thd{Cost accounting} \\
\theadend
\tgroup{4}{Policy actions}{}
\textsf{stop}
  & accept the current verdict
  & \tmut{no}
  & $0$ \\
\act{att}{re-attend}
  & instruct the model to re-examine the region named by $Q$
  & \tmut{no}
  & prompt + one critique \\
\act{rea}{reason}
  & run and aggregate $t$ additional critique passes
  & \tmut{no}
  & $t\times$ critique \\
\act{enc}{re-encode}
  & recompute $\bz$ from a higher-resolution center crop
  & yes
  & re-prefill + one critique \\
\tgroupsep
\tgroup{4}{Diagnostic controls}{}
Placebo critique
  & the \textsf{att} path without the instruction to re-examine
  & \tmut{no}
  & prompt + one critique \\
Ten-crop oracle
  & re-encode the best of ten half-frame crops
  & yes
  & matched to \textsf{enc} \\
Same-interface re-encode
  & re-encode while reproducing the original interface state
  & \tmut{no}
  & matched to \textsf{enc} \\
\bottomrule
\end{tabular*}
\setlength{\tabcolsep}{6pt}
\end{table}

\paragraph{Action costs} Table~\ref{tab:actions} specifies the four actions. Normalized to one critique and net of stopping, the decode-token and prefill-token cost vectors are $(0,\,1.00,\,2.75,\,1.01)$ and $(0,\,1.00,\,2.93,\,1.02)$ for $(\textsf{stop},\textsf{att},\textsf{rea},\textsf{enc})$, and wall-clock time gives $(0,\,1.00,\,3.51,\,1.00)$. The $\lambda$ sweep of Section~\ref{sec:ias} uses decode-token cost. The $\textsf{enc}$ action uses a question-independent crop from the central half of the image. A same-interface re-encode produces an identical digest, hence an identical output law and exactly zero gain on all $1280$ items. Question-conditioned acquisition policies \citep{vstar,tikart} could perform better, so the measured $\Delta(\textsf{enc})$ does not upper-bound re-encoding in general.

\paragraph{Labels} Natural-item labels are the step-correctness annotations released with \citet{visualprm}. Certified-pair labels need no adjudication because the construction writes the two fact values into known image regions.

\section{Experimental protocol}\label{app:protocol}

\paragraph{Routing rules and controls} At $\lambda=0$, the specified \IAS{} rule selects reasoning on $1266$ of $1280$ natural items and re-encoding on the remaining $14$. At $\lambda=0.01$, it stops on $1184$ natural items and every certified pair. Across the full $\lambda$ sweep, every protocol-specified policy remains within $0.002$ regret of stopping, and all adaptive rules reduce to stopping from $\lambda=0.05$ onward under each cost measure. The specified rule requires no labels or measured gains from test items. The median-threshold diagnostic acts on the items whose score is at or below its median and stops on the rest. It takes the non-stop action with the largest gain on the choosing replicate, and the random score controls for this outcome-informed selection. Raw interface distance is the root-mean-square magnitude of the pixel tensor. Besides the routers of Table~\ref{tab:routers}, two interface-only routers use a $1024$-dimensional size-invariant projection of the pixel tensor, and neither improves over stopping.

\paragraph{Statistical plan} All bootstrap intervals are clustered over the $1018$ source records containing the $1280$ items. Each item is measured in two replicates. Across replicates, verdict probabilities correlate between $0.84$ and $0.94$, path-matched gains between $0.47$ and $0.72$, and the per-item optimal action agrees on $65.9\%$ of items. Reversing the choosing and scoring replicates changes every regret in Table~\ref{tab:main} by at most $0.004$.\looseness=-1

\paragraph{Decoder} The decoder $d_\theta$ is an $\ell_2$-regularized logistic regression on a $128$-dimensional Gaussian random projection of the flattened interface tensor. No hyperparameters are tuned. We use five-fold cross-fitting, with folds grouped by source image for rendered items and by record for the natural stream to prevent leakage between related items. Confidence is calibrated out of fold using monotone isotonic regression. The question and candidate step define the fact being decoded but are not decoder features.

\section{Power calculation for the routing gate}\label{app:power}

We estimate power by simulation because $V$ maximizes over actions and items are clustered within templates, the planned groups of eight related items. Each simulated template receives a best action. An item inherits it with probability $\rho$ and receives an independently drawn action otherwise (Table~\ref{tab:power}). No false positives occurred in the null simulations over this grid.

\begin{table}[htbp]
\centering
\footnotesize
\caption{\textbf{Power of the routing-value gate.} Each cell gives the simulated probability that the $95\%$ lower bound on the split-sample routing value $V$ exceeds zero. The columns give the number of eight-item templates, and $\rho$ controls dependence among items from the same template. Bold values mark the smallest template count achieving at least $0.90$ power for the pre-registered target $V=0.03$.}
\label{tab:power}
\vspace{1pt}
\setlength{\tabcolsep}{2pt}
\begin{tabular}{@{}*{2}{Y{0.105\textwidth}}@{\hspace{2.2em}}*{6}{Y{0.072\textwidth}}@{}}
\theadn{2}{3.6pt}
\multicolumn{2}{@{}c@{\hspace{2.2em}}}{\thd{Simulation setting}}
& \multicolumn{6}{c@{}}{\thd{Number of eight-item templates}} \\
\tpartial{\cmidrule(r{2.2em}){1-2}\cmidrule{3-8}}
\thd{\boldmath$V$} & \thd{\boldmath$\rho$}
& \thd{\boldmath$10$} & \thd{\boldmath$20$} & \thd{\boldmath$40$}
& \thd{\boldmath$80$} & \thd{\boldmath$160$} & \thd{\boldmath$320$} \\
\theadend
$0.01$ & $0.5$ & $0.03$ & $0.07$ & $0.12$ & $0.27$ & $0.45$ & $0.71$ \\
\thairx{0.7pt}
$0.02$ & $0.5$ & $0.08$ & $0.16$ & $0.35$ & $0.68$ & $0.94$ & $0.99$ \\
\thairx{0.7pt}
$0.03$ & $0.5$ & $0.13$ & $0.33$ & $0.68$ & \tkey{cCert}{$\mathbf{0.93}$} & $1.00$ & $1.00$ \\
\thairx{0.7pt}
$0.03$ & $0.9$ & $0.03$ & $0.15$ & $0.52$ & $0.86$ & \tkey{cCert}{$\mathbf{0.99}$} & $1.00$ \\
\thairx{0.7pt}
$0.05$ & $0.5$ & $0.34$ & $0.77$ & $0.96$ & $1.00$ & $1.00$ & $1.00$ \\
\bottomrule
\end{tabular}
\setlength{\tabcolsep}{6pt}
\end{table}

We power the gate for $V=0.03$, the smallest value that remains positive under the cost accounting. At this effect size, $160$ eight-item templates give $0.99$ power at $\rho=0.9$, so the design uses $1280$ items. The realized stream has $1018$ records and an estimated within-record correlation of $0.25$, which gives power $1$ at $V=0.03$ and $0.92$--$0.98$ at $V=0.02$.

\section{Estimating the routing value}\label{app:bookkeeping}

\paragraph{Bias of the plug-in estimate} A plug-in estimator replaces true gains with measured gains. Taking the maximum within each item then favors actions with positive errors, and more items do not remove this bias. In a simulation with no item-specific signal, the plug-in estimate was $0.125$ with $80$ items and $0.128$ with $1200$, and its bootstrap interval excluded zero in every simulation run. The simulated noise scale was not fitted to the observed data, so these values should not be compared with the $0.101$ in Table~\ref{tab:ladder}. We therefore select per-item maxima on one replicate and evaluate them on the other (Table~\ref{tab:ladder}). Sampling is needed because $\textsf{rea}$ aggregates multiple critiques, and verdicts read directly from decision-position logits would make the replicates identical.

\begin{table}[htbp] \centering \footnotesize
\caption{\textbf{From estimated routing headroom to realized gain} on the natural stream. Every entry is an estimate.}
\label{tab:ladder}
\begin{tabular*}{\textwidth}{@{\hspace{\tinset}\extracolsep{\fill}}l l r >{\raggedright\arraybackslash}p{5.5cm}@{\hspace{\tinset}}}
\thead
\thd{Symbol} & \thd{Quantity} & \thd{Value} & \thd{Interpretation} \\
\theadend
$\hat V_{\text{plug}}$ & plug-in per-item maximum & $0.101$ & inflated by maximizing noisy gains \\
\thairx{1.2pt}
$\hat V$ & split-sample routing value & $0.056$ & estimated headroom after bias control \\
\thairx{1.2pt}
$\hat V_{\text{fit}}$ & best of eight fitted interface routers & $\approx0$ & no fitted router recovers the headroom \\
\thairx{1.2pt}
$\hat V_{\IAS}$ & \ROUTE{} as realized & $0$ & matches the best fixed action \\
\bottomrule
\end{tabular*}
\end{table}

The selected action and its regret are unchanged when a per-item constant is added to every estimated gain, so we report action-ordering accuracy and regret.

\section{Extended results}\label{app:extra}

\paragraph{Unprompted baseline and the crop arm} Relative to the model's unprompted verdict, re-attention, reasoning and re-encoding have gains of $-0.069$, $-0.070$ and $-0.092$. Their difference from the path-matched results in Section~\ref{sec:results} is the prompt path itself, which has a gain of $-0.070$. This per-item offset is identical across replicates, so it leaves action ordering and regret unchanged. A ten-crop oracle over the half-frame center crop and a $3\times3$ grid of half-frame crops upper-bounds any policy restricted to those crops \citep{vicrop}. When the crop is selected on one replicate and scored on the other, the oracle gains $+0.035$ over the path-matched stopping baseline, with a $95\%$ interval of $[+0.022,+0.047]$, but has a gain of $-0.035$ relative to the unprompted verdict.

\begin{table}[htbp]
\centering
\scriptsize
\caption{\textbf{Oracle-supervised routers by signal.} The vision-tower routers read $\bz$ alone, and the hidden-state routers read $(\bz,Q,R)$. Gain intervals are clustered by record, and \emph{Action rate} is the fraction of natural items sent to a non-stop action. On certified pairs, \emph{Any action} counts non-stop choices, while \emph{Dominated} counts choices of re-attention or further reasoning.}
\label{tab:routers}
\vspace{1pt}
\setlength{\tabcolsep}{3.5pt}
\begin{tabular*}{\textwidth}{@{\hspace{\tinset}\extracolsep{\fill}}lllrrrr@{\hspace{\tinset}}}
\theadn{2}{3.2pt}
& & \multicolumn{3}{c}{\thd{Natural stream}}
& \multicolumn{2}{c@{\hspace{\tinset}}}{\thd{Certified pairs}} \\
\tpartial{\cmidrule(lr){3-5}\cmidrule(l){6-7}}
\thd{Features}
& \thd{Learner}
& \thd{Gain over \textsf{stop}, $95\%$ CI}
& \thd{Order acc.}
& \thd{Action rate}
& \thd{Any action}
& \thd{Dominated} \\
\theadend
\tgroup{7}{Vision-tower outputs, a signal of $\bz$}{}
projected tokens
& ridge
& $-0.003$\ \tmut{$[-0.011,\,+0.006]$}
& $0.244$
& $0.77$
& $0/80$
& \tkey{cCert}{$0/80$} \\

projected tokens
& boosting
& $-0.003$\ \tmut{$[-0.012,\,+0.007]$}
& $0.254$
& $0.78$
& $71/80$
& \tkey{cExcl}{$37/80$} \\
\thair

patch features
& ridge
& $+0.002$\ \tmut{$[-0.006,\,+0.010]$}
& $0.245$
& $0.77$
& $80/80$
& \tkey{cExcl}{$80/80$} \\

patch features
& boosting
& $+0.003$\ \tmut{$[-0.006,\,+0.012]$}
& $0.259$
& $0.78$
& $70/80$
& \tkey{cExcl}{$5/80$} \\
\thair

both
& ridge
& $+0.000$\ \tmut{$[-0.008,\,+0.009]$}
& $0.247$
& $0.76$
& $6/80$
& \tkey{cExcl}{$6/80$} \\

both
& boosting
& $-0.002$\ \tmut{$[-0.011,\,+0.007]$}
& $0.248$
& $0.80$
& $62/80$
& \tkey{cExcl}{$62/80$} \\
\tgroupsep
\tgroup{7}{Verifier hidden state at the decision position, a signal of $(\bz,Q,R)$}{}
final layer
& ridge
& $+0.008$\ \tmut{$[+0.000,\,+0.016]$}
& $0.270$
& $0.79$
& $0/80$
& \tkey{cCert}{$0/80$} \\

final layer
& boosting
& $+0.002$\ \tmut{$[-0.007,\,+0.011]$}
& $0.258$
& $0.80$
& $0/80$
& \tkey{cCert}{$0/80$} \\
\thair

middle layer
& ridge
& $+0.008$\ \tmut{$[-0.001,\,+0.016]$}
& $0.282$
& $0.80$
& $0/80$
& \tkey{cCert}{$0/80$} \\

middle layer
& boosting
& $+0.005$\ \tmut{$[-0.004,\,+0.013]$}
& $0.268$
& $0.80$
& $19/80$
& \tkey{cExcl}{$19/80$} \\
\thair

both
& ridge
& $+0.010$\ \tmut{$[+0.002,\,+0.018]$}
& $0.283$
& $0.79$
& $0/80$
& \tkey{cCert}{$0/80$} \\

both
& boosting
& $+0.004$\ \tmut{$[-0.005,\,+0.012]$}
& $0.260$
& $0.81$
& $0/80$
& \tkey{cCert}{$0/80$} \\
\bottomrule
\end{tabular*}
\setlength{\tabcolsep}{6pt}
\end{table}

\begin{table}[t]
\centering
\scriptsize
\caption{\textbf{Results by source benchmark.} Policies are fit once on the full stream and evaluated within each source. Gains are relative to the source-specific best fixed action. Here $\hat V$ is the split-sample routing value, and intervals are $95\%$ confidence intervals clustered by record. The learned router is boosting on patch features. Median-rule entries report \IAS{} and random-score gains, respectively.}
\label{tab:sources}
\vspace{1pt}
\setlength{\tabcolsep}{3.5pt}
\resizebox{\textwidth}{!}{%
\begin{tabular}{@{\hspace{\tinset}}llrrcr@{\hspace{\tinset}}}
\thead
\thd{Source}
& \thd{Best fixed}
& \multicolumn{1}{c}{\thd{$\hat V$}}
& \multicolumn{1}{c}{\thd{\ROUTE{} specified}}
& \multicolumn{1}{c}{\thd{Median \ROUTE{} / random}}
& \multicolumn{1}{c@{\hspace{\tinset}}}{\thd{Learned router}} \\
\theadend
MathVerse ($454$)
& \textsf{att}
& $+0.067$ {\scriptsize\tmut{$[+0.051,+0.078]$}}
& $+0.002$ {\scriptsize\tmut{$[-0.012,+0.015]$}}
& $+0.020$ / $+0.023$
& $-0.004$ {\scriptsize\tmut{$[-0.018,+0.011]$}} \\
\thairx{1pt}
MathVision ($331$)
& \textsf{stop}
& $+0.031$ {\scriptsize\tmut{$[+0.013,+0.042]$}}
& $-0.012$ {\scriptsize\tmut{$[-0.028,+0.003]$}}
& $+0.000$ / $+0.002$
& $-0.007$ {\scriptsize\tmut{$[-0.021,+0.006]$}} \\
\thairx{1pt}
DynaMath ($260$)
& \textsf{stop}
& $+0.065$ {\scriptsize\tmut{$[+0.044,+0.079]$}}
& $-0.004$ {\scriptsize\tmut{$[-0.022,+0.017]$}}
& $+0.019$ / $+0.018$
& $+0.009$ {\scriptsize\tmut{$[-0.013,+0.031]$}} \\
\thairx{1pt}
WeMath ($126$)
& \textsf{rea}
& $+0.044$ {\scriptsize\tmut{$[+0.018,+0.055]$}}
& $+0.001$ {\scriptsize\tmut{$[-0.001,+0.005]$}}
& $+0.019$ / $+0.025$
& $+0.006$ {\scriptsize\tmut{$[-0.012,+0.024]$}} \\
\thairx{1pt}
MMMU ($109$)
& \textsf{rea}
& $+0.034$ {\scriptsize\tmut{$[+0.011,+0.055]$}}
& $-0.001$ {\scriptsize\tmut{$[-0.003,+0.001]$}}
& $-0.006$ / $-0.009$
& $-0.004$ {\scriptsize\tmut{$[-0.022,+0.012]$}} \\
\tunder{cBand}\textbf{All five} ($1280$)
& \textsf{rea}
& $+0.056$ {\scriptsize\tmut{$[+0.047,+0.062]$}}
& $+0.000$ {\scriptsize\tmut{$[+0.000,+0.001]$}}
& $+0.016$ / $+0.018$
& $+0.002$ {\scriptsize\tmut{$[-0.005,+0.010]$}} \\
\tunderend
\bottomrule
\end{tabular}%
}
\setlength{\tabcolsep}{6pt}
\end{table}

\begin{table}[t]
\centering
\footnotesize
\renewcommand{\arraystretch}{0.92}
\caption{\textbf{Natural verdicts under each action.} Columns split the $2560$ measurements by the initial stopping verdict. Proportions use Wilson intervals, and gain intervals are bootstrapped by record.}
\label{tab:a1}
\vspace{1pt}
\begin{tabular*}{\textwidth}{@{\hspace{\tinset}\extracolsep{\fill}}lccc@{\hspace{\tinset}}}
\theadn{2}{3.6pt}
&
&
\multicolumn{2}{c@{\hspace{\tinset}}}{\thd{Initial \textsf{stop} verdict}} \\
\tpartial{\cmidrule(l){3-4}}
\thd{Quantity}
& \thd{All}
& \thd{Wrong}
& \thd{Correct} \\
\theadend
Measurements, $n$
& $2560$
& $954$
& $1606$ \\
\tgroupsep
\tgroup{4}{Inert across three critiques}{}
Rate
& $0.452$ {\scriptsize\tmut{$[0.433,\,0.472]$}}
& $0.496$ {\scriptsize\tmut{$[0.464,\,0.527]$}}
& $0.426$ {\scriptsize\tmut{$[0.403,\,0.451]$}} \\
\tgroupsep
\tgroup{4}{Correct after action}{}
Placebo
& $0.525$ {\scriptsize\tmut{$[0.506,\,0.544]$}}
& $0.126$ {\scriptsize\tmut{$[0.106,\,0.148]$}}
& $0.762$ {\scriptsize\tmut{$[0.741,\,0.782]$}} \\
\textsf{att}
& $0.528$ {\scriptsize\tmut{$[0.509,\,0.547]$}}
& $0.187$ {\scriptsize\tmut{$[0.163,\,0.213]$}}
& $0.731$ {\scriptsize\tmut{$[0.709,\,0.752]$}} \\
\textsf{rea}
& $0.523$ {\scriptsize\tmut{$[0.504,\,0.542]$}}
& $0.152$ {\scriptsize\tmut{$[0.131,\,0.176]$}}
& $0.744$ {\scriptsize\tmut{$[0.722,\,0.764]$}} \\
\textsf{enc}
& $0.500$ {\scriptsize\tmut{$[0.480,\,0.519]$}}
& $0.210$ {\scriptsize\tmut{$[0.185,\,0.237]$}}
& $0.672$ {\scriptsize\tmut{$[0.649,\,0.694]$}} \\
\tgroupsep
\tgroup{4}{Mean gain relative to placebo}{}
\textsf{att}
& $+0.001$ {\scriptsize\tmut{$[-0.007,\,+0.009]$}}
& $+0.030$ {\scriptsize\tmut{$[+0.018,\,+0.041]$}}
& $-0.016$ {\scriptsize\tmut{$[-0.026,\,-0.006]$}} \\
\textsf{rea}
& $+0.003$ {\scriptsize\tmut{$[-0.004,\,+0.010]$}}
& $+0.014$ {\scriptsize\tmut{$[+0.004,\,+0.024]$}}
& $-0.003$ {\scriptsize\tmut{$[-0.013,\,+0.006]$}} \\
\textsf{enc}
& $-0.021$ {\scriptsize\tmut{$[-0.034,\,-0.009]$}}
& $+0.042$ {\scriptsize\tmut{$[+0.026,\,+0.059]$}}
& $-0.059$ {\scriptsize\tmut{$[-0.075,\,-0.043]$}} \\
\bottomrule
\end{tabular*}
\end{table}

\paragraph{Oracle-supervised vision-tower routers} The first block of Table~\ref{tab:routers} reports six routers trained directly on measured action gains with ridge regression or gradient-boosted decision trees. Projected visual tokens delivered to the language model are summarized by their mean and maximum ($7168$ dimensions), and pre-projection patch features by their mean and standard deviation ($2560$ dimensions). The routers are cross-fitted by record and scored on the held-out replicate. For the certified-pair screen, each is refit on the full natural stream, which also supplies the reported natural action rate. Table~\ref{tab:main} reports the lowest-regret router of each block, boosting on patch features and ridge on both hidden layers.

\paragraph{Oracle-supervised hidden-state routers} The second block repeats the experiment with the richest signal the frozen verifier computes from $(\bz,Q,R)$, its language-model hidden state at the decision position, taken at the final layer, a middle layer or both ($3584$ dimensions each). These states are bit-identical across all $80$ certified pairs. The best interval, $[+0.002,+0.018]$, excludes zero with an unadjusted bootstrap $p=0.019$, which Holm correction across the six routers raises to $0.11$. Reversing the choosing and scoring replicates gives the same gain and interval. After correction the gain is not significant, so we do not claim that the text-conditioned signal improves routing.

\paragraph{Source benchmarks} In Table~\ref{tab:sources}, the routing-value interval excludes zero in all five source benchmarks, while the intervals for the specified \IAS{} rule and the learned router contain zero throughout. Under the median rule, the gains from \IAS{} and a random score differ by at most $0.006$.

\paragraph{Natural verifier errors under each action} Table~\ref{tab:a1} divides the $2560$ item-by-replicate measurements into $954$ initially incorrect and $1606$ initially correct stopping verdicts. The placebo critique, the path-matched baseline of Section~\ref{sec:ias}, follows the same prompt path without instructing the model to re-examine the image. \emph{Inert} means that all three sampled critiques produce the same verdict. \emph{Correct after} is the fraction of verdicts that are correct after an action, and \emph{Gain} is the mean path-matched change relative to the placebo. About half of the initial errors are inert, which does not establish that interface states collide (Proposition~\ref{prop:sharp}). Where verdicts change, each action repairs more errors than the placebo but also breaks more initially correct verdicts. These effects cancel over the stream for re-attention and reasoning and are net negative for re-encoding.

\end{document}

%% file: affiliations.tex
\DeclareAffiliation{hbku}{%
  College of Science and Engineering, Hamad Bin Khalifa University, Doha, Qatar}

\DeclareAffiliation{tamu}{%
  Department of Computer and Electrical Engineering,
  Texas A\&M University, College Station, TX, USA}

\DeclareAffiliation{iub}{%
  Luddy School of Informatics, Computing, and Engineering,
  Indiana University Bloomington, Bloomington, IN, USA}

\DeclareAffiliation{uis}{%
  Department of Computer Science,
  University of Illinois Springfield, Springfield, IL, USA}


%% file: references.bib
@misc{visualprm,
      title={{VisualPRM}: An Effective Process Reward Model for Multimodal Reasoning}, 
      author={Weiyun Wang and Zhangwei Gao and Lianjie Chen and Zhe Chen and Jinguo Zhu and Xiangyu Zhao and Yangzhou Liu and Yue Cao and Shenglong Ye and Xizhou Zhu and Lewei Lu and Haodong Duan and Yu Qiao and Jifeng Dai and Wenhai Wang},
      year={2025},
      eprint={2503.10291},
      archivePrefix={arXiv},
      primaryClass={cs.CV},
      url={https://arxiv.org/abs/2503.10291}, 
}

@inproceedings{prmbench,
    title = "{PRMB}ench: A Fine-grained and Challenging Benchmark for Process-Level Reward Models",
    author = "Song, Mingyang  and
      Su, Zhaochen  and
      Qu, Xiaoye  and
      Zhou, Jiawei  and
      Cheng, Yu",
    editor = "Che, Wanxiang  and
      Nabende, Joyce  and
      Shutova, Ekaterina  and
      Pilehvar, Mohammad Taher",
    booktitle = "Proceedings of the 63rd Annual Meeting of the Association for Computational Linguistics (Volume 1: Long Papers)",
    month = jul,
    year = "2025",
    address = "Vienna, Austria",
    publisher = "Association for Computational Linguistics",
    url = "https://aclanthology.org/2025.acl-long.1230/",
    doi = "10.18653/v1/2025.acl-long.1230",
    pages = "25299--25346",
    ISBN = "979-8-89176-251-0"
}

@misc{failtosee,
      title={Failing to See or Failing to Know? {A}ttributing Errors in Vision-Language Models}, 
      author={Khang Nhat Hoang Vo and Artem Vazhentsev and Artem Shelmanov and Timothy Baldwin and Yova Kementchedjhieva},
      year={2026},
      eprint={2607.04683},
      archivePrefix={arXiv},
      primaryClass={cs.CV},
      url={https://arxiv.org/abs/2607.04683}, 
}

@inproceedings{gpro,
    title = "Addressing Overthinking in Large Vision-Language Models via Gated Perception-Reasoning Optimization",
    author = "Diao, Xingjian  and
      Liu, Zheyuan  and
      Zhang, Chunhui  and
      Wu, Weiyi  and
      Kong, Keyi  and
      Shi, Lin  and
      Ding, Kaize  and
      Vosoughi, Soroush  and
      Gui, Jiang",
    editor = "Liakata, Maria  and
      Moreira, Viviane P.  and
      Zhang, Jiajun  and
      Jurgens, David",
    booktitle = "Findings of the {A}ssociation for {C}omputational {L}inguistics: {ACL} 2026",
    month = jul,
    year = "2026",
    address = "San Diego, California, United States",
    publisher = "Association for Computational Linguistics",
    url = "https://aclanthology.org/2026.findings-acl.215/",
    doi = "10.18653/v1/2026.findings-acl.215",
    pages = "4393--4410",
    ISBN = "979-8-89176-395-1"
}

@misc{avis,
      title={{AVIS}: Adaptive Test-Time Scaling for Vision-Language Models}, 
      author={Ahmadreza Jeddi and Minh Ngoc Le and Amirhossein Kazerouni and Hakki Can Karaimer and Hue Nguyen and Iqbal Mohomed and Michael Brudno and Alex Levinshtein and Konstantinos G. Derpanis and Babak Taati and Radek Grzeszczuk},
      year={2026},
      eprint={2606.11576},
      archivePrefix={arXiv},
      primaryClass={cs.CV},
      url={https://arxiv.org/abs/2606.11576}, 
}

@inproceedings{damani,
 author = {Damani, Mehul and Shenfeld, Idan and Peng, Andi and Bobu, Andreea and Andreas, Jacob},
 booktitle = {International Conference on Learning Representations},
 editor = {Y. Yue and A. Garg and N. Peng and F. Sha and R. Yu},
 pages = {102783--102802},
 title = {Learning How Hard to Think: Input-Adaptive Allocation of {LM} Computation},
 url = {https://proceedings.iclr.cc/paper_files/paper/2025/file/ff414825df833edb8b1839e3d5d495e9-Paper-Conference.pdf},
 volume = {2025},
 year = {2025}
}

@misc{imagine,
      title={When and How Much to Imagine: Adaptive Test-Time Scaling with World Models for Visual Spatial Reasoning}, 
      author={Shoubin Yu and Yue Zhang and Zun Wang and Jaehong Yoon and Huaxiu Yao and Mingyu Ding and Mohit Bansal},
      year={2026},
      eprint={2602.08236},
      archivePrefix={arXiv},
      primaryClass={cs.CV},
      url={https://arxiv.org/abs/2602.08236}, 
}

@InProceedings{ttsvlm,
author="Sammani, Fawaz
and Chamiti, Tzoulio
and Deligiannis, Nikos",
editor="Favaro, Paolo
and Kukelova, Zuzana
and Maki, Atsuto
and Rohrbach, Anna
and Schindler, Konrad
and Tombari, Federico",
title="On Test-Time Scaling for Vision-Language Models",
booktitle="Computer Vision -- ECCV 2026",
year="2026",
publisher="Springer Nature Switzerland",
address="Cham",
pages="168--185",
isbn="978-3-032-37271-0"
}

@inproceedings{visualswap,
title={Are {VLMs} Seeing or Just Saying? {U}ncovering the Illusion of Visual Re-examination},
author={Chufan Shi and Cheng Yang and Yaokang Wu and Linghao Jin and Bo Shui and Taylor Berg-Kirkpatrick and Xuezhe Ma},
booktitle={Forty-third International Conference on Machine Learning},
year={2026},
url={https://openreview.net/forum?id=DdU1o2ZvWi}
}

@misc{sieve,
      title={Improving Visual Reasoning with Iterative Evidence Refinement}, 
      author={Zeru Shi and Kai Mei and Yihao Quan and Dimitris N. Metaxas and Ruixiang Tang},
      year={2026},
      eprint={2603.14117},
      archivePrefix={arXiv},
      primaryClass={cs.CV},
      url={https://arxiv.org/abs/2603.14117}, 
}

@misc{lookagain,
      title={Qwen Look Again: Guiding Vision-Language Reasoning Models to Re-attention Visual Information}, 
      author={Xu Chu and Xinrong Chen and Guanyu Wang and Zhijie Tan and Kui Huang and Wenyu Lv and Tong Mo and Weiping Li},
      year={2025},
      eprint={2505.23558},
      archivePrefix={arXiv},
      primaryClass={cs.CV},
      url={https://arxiv.org/abs/2505.23558}, 
}

@inproceedings{seeingnotbelieving,
 author = {Liu, Zhining and Chen, Ziyi and Liu, Hui and Luo, Chen and Tang, Xianfeng and Wang, Suhang and Zeng, Jingying and Dai, Zhenwei and Shi, Zhan and Wei, Tianxin and Lu, Hanqing and Dumoulin, Benoit and Tong, Hanghang},
 booktitle = {International Conference on Learning Representations},
 editor = {C. Vondrick and B. Hariharan and C. Raffel and L. Pinto and D. Yang and A. Faust},
 pages = {72849--72876},
 title = {Seeing but Not Believing: Probing the Disconnect Between Visual Attention and Answer Correctness in {VLMs}},
 url = {https://proceedings.iclr.cc/paper_files/paper/2026/file/76818d8d85e05e45ce3a16a8468619d1-Paper-Conference.pdf},
 volume = {2026},
 year = {2026}
}

@inproceedings{lostinembeddings,
    title = "Lost in Embeddings: Information Loss in Vision{--}Language Models",
    author = "Li, Wenyan  and
      Tang, Raphael  and
      Li, Chengzu  and
      Zhang, Caiqi  and
      Vuli{\'c}, Ivan  and
      S{\o}gaard, Anders",
    editor = "Christodoulopoulos, Christos  and
      Chakraborty, Tanmoy  and
      Rose, Carolyn  and
      Peng, Violet",
    booktitle = "Findings of the Association for Computational Linguistics: EMNLP 2025",
    month = nov,
    year = "2025",
    address = "Suzhou, China",
    publisher = "Association for Computational Linguistics",
    url = "https://aclanthology.org/2025.findings-emnlp.1235/",
    doi = "10.18653/v1/2025.findings-emnlp.1235",
    pages = "22676--22693",
    ISBN = "979-8-89176-335-7"
}

@misc{equivstructures,
      title={Intriguing Equivalence Structures of the Embedding Space of Vision Transformers}, 
      author={Shaeke Salman and Md Montasir Bin Shams and Xiuwen Liu},
      year={2024},
      eprint={2401.15568},
      archivePrefix={arXiv},
      primaryClass={cs.CV},
      url={https://arxiv.org/abs/2401.15568}, 
}

@InProceedings{mmvp,
    author    = {Tong, Shengbang and Liu, Zhuang and Zhai, Yuexiang and Ma, Yi and LeCun, Yann and Xie, Saining},
    title     = {Eyes Wide Shut? {E}xploring the Visual Shortcomings of Multimodal {LLMs}},
    booktitle = {Proceedings of the IEEE/CVF Conference on Computer Vision and Pattern Recognition (CVPR)},
    month     = {June},
    year      = {2024},
    pages     = {9568-9578}
}

@InProceedings{blindtest,
    author    = {Rahmanzadehgervi, Pooyan and Bolton, Logan and Taesiri, Mohammad Reza and Nguyen, Anh Totti},
    title     = {Vision language models are blind},
    booktitle = {Proceedings of the Asian Conference on Computer Vision (ACCV)},
    month     = {December},
    year      = {2024},
    pages     = {18-34}
}

@inproceedings{scalingattack,
author = {Qixue Xiao and Yufei Chen and Chao Shen and Yu Chen and Kang Li},
title = {Seeing is Not Believing: Camouflage Attacks on Image Scaling Algorithms},
booktitle = {Proceedings of the 28th USENIX Security Symposium},
year = {2019},
isbn = {978-1-939133-06-9},
address = {Santa Clara, CA, USA},
pages = {443--460},
url = {https://www.usenix.org/conference/usenixsecurity19/presentation/xiao},
publisher = {USENIX Association},
month = aug
}

@inproceedings{advpreproc,
author = {Erwin Quiring and David Klein and Daniel Arp and Martin Johns and Konrad Rieck},
title = {Adversarial Preprocessing: Understanding and Preventing Image-Scaling Attacks in Machine Learning},
booktitle = {Proceedings of the 29th USENIX Security Symposium},
year = {2020},
isbn = {978-1-939133-17-5},
pages = {1363--1380},
url = {https://www.usenix.org/conference/usenixsecurity20/presentation/quiring},
publisher = {USENIX Association},
month = aug
}

@inproceedings{randomization,
title={Mitigating Adversarial Effects Through Randomization},
author={Cihang Xie and Jianyu Wang and Zhishuai Zhang and Zhou Ren and Alan Yuille},
booktitle={International Conference on Learning Representations},
year={2018},
url={https://openreview.net/forum?id=Sk9yuql0Z}
}

@InProceedings{perceval,
    author    = {Min, Yingqian and Zhou, Kun and Li, Yifan and Wu, Yuhuan and Peng, Han and Du, Yifan and Zhao, Wayne Xin and Yang, Min and Wen, Ji-Rong},
    title     = {Improving Vision-language Models with Perception-centric Process Reward Models},
    booktitle = {Proceedings of the IEEE/CVF Conference on Computer Vision and Pattern Recognition (CVPR)},
    month     = {June},
    year      = {2026},
    pages     = {33099-33109}
}

@inproceedings{visionthink,
 author = {Yang, Senqiao and Li, Junyi and Lai, Xin and Wu, Jinming and Li, Wei and Ma, Zejun and Yu, Bei and Zhao, Hengshuang and Jia, Jiaya},
 booktitle = {Advances in Neural Information Processing Systems},
 doi = {10.52202/085713-3182},
 editor = {D. Belgrave and C. Zhang and H. Lin and R. Pascanu and P. Koniusz and M. Ghassemi and N. Chen},
 pages = {95187--95227},
 publisher = {Curran Associates, Inc.},
 title = {{VisionThink}: Smart and Efficient Vision Language Model via Reinforcement Learning},
 url = {https://proceedings.neurips.cc/paper_files/paper/2025/file/88be023075a5a3ff3dc3b5d26623fa22-Paper-Conference.pdf},
 volume = {38, Main Conference},
 year = {2025}
}

@inproceedings{badseeing,
title={Bad Seeing or Bad Thinking? {R}ewarding Perception for Multimodal Reasoning},
author={Haozhe Wang and Qixin Xu and Changpeng Wang and Taofeng Xue and Chong Peng and Wenhu Chen and Fangzhen Lin},
booktitle={Forty-third International Conference on Machine Learning},
year={2026},
url={https://openreview.net/forum?id=duzdftKtUA}
}

@inproceedings{seeing2thinking,
title={From Seeing to Thinking: Decoupling Perception and Reasoning Improves Post-Training of Vision-Language Models},
author={Juncheng Wu and Hardy Chen and Haoqin Tu and Xianfeng Tang and Freda Shi and Hui Liu and Hanqing Lu and Cihang Xie and Yuyin Zhou},
booktitle={Forty-third International Conference on Machine Learning},
year={2026},
url={https://openreview.net/forum?id=r7uOjvZdzO}
}

@misc{disentangle,
      title={Disentangling Perception and Reasoning in Multimodal {LLMs} via Reward Design}, 
      author={Omar Sharif and Eftekhar Hossain and Nikhil Singh and Patrick Ng},
      year={2026},
      eprint={2601.00215},
      archivePrefix={arXiv},
      primaryClass={cs.CV},
      url={https://arxiv.org/abs/2601.00215}, 
}

@InProceedings{deeperthought,
    author    = {Peng, Ruiying and Wu, Xueyu and Lei, Jing and Hou, Lu and Ma, Yuanzheng and Li, Xiao-Hui},
    title     = {Deeper Thought, Weaker Aim: Understanding and Mitigating Perceptual Impairment during Reasoning in Multimodal Large Language Models},
    booktitle = {Proceedings of the IEEE/CVF Conference on Computer Vision and Pattern Recognition (CVPR)},
    month     = {June},
    year      = {2026},
    pages     = {12064-12073}
}

@inproceedings{vlmneedwords,
title={{VLMs} Need Words: Vision Language Models Ignore Visual Detail In Favor of Semantic Anchors},
author={Haz Sameen Shahgir and Xiaofu Chen and Yu Fu and Erfan Shayegani and Nael Abu-Ghazaleh and Yova Kementchedjhieva and Yue Dong},
booktitle={Third Conference on Language Modeling},
year={2026},
url={https://openreview.net/forum?id=6DJpzvjWbK}
}

@misc{visualaccess,
      title={Visual Access Boundaries in Vision-Language Model Reasoning}, 
      author={Hiroto Osaka and Shohei Taniguchi and Gouki Minegishi and Kai Yamashita and Masahiro Suzuki and Yutaka Matsuo},
      year={2026},
      eprint={2607.12815},
      archivePrefix={arXiv},
      primaryClass={cs.AI},
      url={https://arxiv.org/abs/2607.12815}, 
}

@inproceedings{compensate,
    title = "{LLM}s Can Compensate for Deficiencies in Visual Representations",
    author = "Takishita, Sho  and
      Gala, Jay  and
      Mohamed, Abdelrahman  and
      Inui, Kentaro  and
      Kementchedjhieva, Yova",
    editor = "Christodoulopoulos, Christos  and
      Chakraborty, Tanmoy  and
      Rose, Carolyn  and
      Peng, Violet",
    booktitle = "Findings of the Association for Computational Linguistics: EMNLP 2025",
    month = nov,
    year = "2025",
    address = "Suzhou, China",
    publisher = "Association for Computational Linguistics",
    url = "https://aclanthology.org/2025.findings-emnlp.825/",
    doi = "10.18653/v1/2025.findings-emnlp.825",
    pages = "15253--15272",
    ISBN = "979-8-89176-335-7"
}

@InProceedings{pearl,
    author    = {Zhang, Chi and Qiu, Haibo and Zhang, Qiming and Xu, Yufei and Zeng, Zhixiong and Yang, Siqi and Shi, Peng and Ma, Lin and Zhang, Jing},
    title     = {Perceptual-Evidence Anchored Reinforced Learning for Multimodal Reasoning},
    booktitle = {Proceedings of the IEEE/CVF Conference on Computer Vision and Pattern Recognition (CVPR)},
    month     = {June},
    year      = {2026},
    pages     = {41111-41120}
}

@misc{vlprm,
      title={Training Vision-Language Process Reward Models for Test-Time Scaling in Multimodal Reasoning: Key Insights and Lessons Learned}, 
      author={Brandon Ong and Tej Deep Pala and Vernon Toh and William Chandra Tjhi and Soujanya Poria},
      year={2025},
      eprint={2509.23250},
      archivePrefix={arXiv},
      primaryClass={cs.AI},
      url={https://arxiv.org/abs/2509.23250}, 
}

@misc{tikart,
      title={{TikArt}: Stabilizing Aperture-Guided Fine-Grained Visual Reasoning with Reinforcement Learning}, 
      author={Hao Ding and Zhichuan Yang and Weijie Ge and Ziqin Gao and Chaoyi Lu and Lei Zhao},
      year={2026},
      eprint={2602.14482},
      archivePrefix={arXiv},
      primaryClass={cs.CV},
      url={https://arxiv.org/abs/2602.14482}, 
}

@InProceedings{vstar,
    author    = {Wu, Penghao and Xie, Saining},
    title     = {{V$^*$}: Guided Visual Search as a Core Mechanism in Multimodal {LLMs}},
    booktitle = {Proceedings of the IEEE/CVF Conference on Computer Vision and Pattern Recognition (CVPR)},
    month     = {June},
    year      = {2024},
    pages     = {13084-13094}
}

@misc{qwen25vl,
      title={{Qwen2.5-VL} Technical Report}, 
      author={Shuai Bai and Keqin Chen and Xuejing Liu and Jialin Wang and Wenbin Ge and Sibo Song and Kai Dang and Peng Wang and Shijie Wang and Jun Tang and Humen Zhong and Yuanzhi Zhu and Mingkun Yang and Zhaohai Li and Jianqiang Wan and Pengfei Wang and Wei Ding and Zheren Fu and Yiheng Xu and Jiabo Ye and Xi Zhang and Tianbao Xie and Zesen Cheng and Hang Zhang and Zhibo Yang and Haiyang Xu and Junyang Lin},
      year={2025},
      eprint={2502.13923},
      archivePrefix={arXiv},
      primaryClass={cs.CV},
      url={https://arxiv.org/abs/2502.13923}, 
}

@InProceedings{llava15,
    author    = {Liu, Haotian and Li, Chunyuan and Li, Yuheng and Lee, Yong Jae},
    title     = {Improved Baselines with Visual Instruction Tuning},
    booktitle = {Proceedings of the IEEE/CVF Conference on Computer Vision and Pattern Recognition (CVPR)},
    month     = {June},
    year      = {2024},
    pages     = {26296-26306}
}

@inproceedings{vicrop,
 author = {Zhang, Jiarui and Khayatkhoei, Mahyar and Chhikara, Prateek and Ilievski, Filip},
 booktitle = {International Conference on Learning Representations},
 editor = {Y. Yue and A. Garg and N. Peng and F. Sha and R. Yu},
 pages = {68194--68213},
 title = {{MLLMs} Know Where to Look: Training-free Perception of Small Visual Details with Multimodal {LLMs}},
 url = {https://proceedings.iclr.cc/paper_files/paper/2025/file/aaa0ac4253da75faf9b0dc0dda062612-Paper-Conference.pdf},
 volume = {2025},
 year = {2025}
}

@misc{chameleon,
      title={Chameleon: Adaptive Adversarial Agents for Scaling-Based Visual Prompt Injection in Multimodal {AI} Systems}, 
      author={M Zeeshan and Saud Satti},
      year={2025},
      eprint={2512.04895},
      archivePrefix={arXiv},
      primaryClass={cs.AI},
      url={https://arxiv.org/abs/2512.04895}, 
}

@inproceedings{patchsize,
title={Steal the Patch Size: Adversarially Manipulate Vision Language Models},
author={Kai Hu and Akash Bharadwaj and Weichen Yu and Matt Fredrikson},
booktitle={Forty-third International Conference on Machine Learning},
year={2026},
url={https://openreview.net/forum?id=LrMoEItzRJ}
}

@inproceedings{controltasks,
    title = "Designing and Interpreting Probes with Control Tasks",
    author = "Hewitt, John  and
      Liang, Percy",
    editor = "Inui, Kentaro  and
      Jiang, Jing  and
      Ng, Vincent  and
      Wan, Xiaojun",
    booktitle = "Proceedings of the 2019 Conference on Empirical Methods in Natural Language Processing and the 9th International Joint Conference on Natural Language Processing (EMNLP-IJCNLP)",
    month = nov,
    year = "2019",
    address = "Hong Kong, China",
    publisher = "Association for Computational Linguistics",
    url = "https://aclanthology.org/D19-1275/",
    doi = "10.18653/v1/D19-1275",
    pages = "2733--2743"
}

@misc{bayessuff,
      title={{Bayes}-Sufficient Representations in Supervised Learning}, 
      author={Vasileios Sevetlidis},
      year={2026},
      eprint={2606.04045},
      archivePrefix={arXiv},
      primaryClass={cs.LG},
      url={https://arxiv.org/abs/2606.04045}, 
}

@misc{fibercriterion,
      title={A Fiber Criterion for Representation Identifiability in Supervised Learning}, 
      author={Vasileios Sevetlidis},
      year={2026},
      eprint={2606.01092},
      archivePrefix={arXiv},
      primaryClass={cs.LG},
      url={https://arxiv.org/abs/2606.01092}, 
}

@INPROCEEDINGS{unaligning,
  author={Salman, Shaeke and Shams, Montasir and Nishino, Mao and Liu, Xiuwen},
  booktitle={2025 International Joint Conference on Neural Networks (IJCNN)}, 
  title={Unaligning Everything: Or Aligning Any Text to Any Image in Multimodal Models}, 
  year={2025},
  volume={},
  number={},
  pages={1-9},
  doi={10.1109/IJCNN64981.2025.11228287}}

@INPROCEEDINGS{badchars,
  author={Boucher, Nicholas and Shumailov, Ilia and Anderson, Ross and Papernot, Nicolas},
  booktitle={2022 IEEE Symposium on Security and Privacy (SP)}, 
  title={Bad Characters: Imperceptible {NLP} Attacks}, 
  year={2022},
  volume={},
  number={},
  pages={1987-2004},
  doi={10.1109/SP46214.2022.9833641}}

@Article{lll,
author="Lenstra, A. K.
and Lenstra, H. W.
and Lov{\'a}sz, L.",
title="Factoring polynomials with rational coefficients",
journal="Mathematische Annalen",
year="1982",
month="Dec",
day="01",
volume="261",
number="4",
pages="515--534",
issn="1432-1807",
doi="10.1007/BF01457454",
url="https://doi.org/10.1007/BF01457454"
}

@inproceedings{navit,
 author = {Dehghani, Mostafa and Mustafa, Basil and Djolonga, Josip and Heek, Jonathan and Minderer, Matthias and Caron, Mathilde and Steiner, Andreas and Puigcerver, Joan and Geirhos, Robert and Alabdulmohsin, Ibrahim M and Oliver, Avital and Padlewski, Piotr and Gritsenko, Alexey and Lucic, Mario and Houlsby, Neil},
 booktitle = {Advances in Neural Information Processing Systems},
 doi = {10.52202/075280-0106},
 editor = {A. Oh and T. Naumann and A. Globerson and K. Saenko and M. Hardt and S. Levine},
 pages = {2252--2274},
 publisher = {Curran Associates, Inc.},
 title = {Patch n’ Pack: {NaViT}, a Vision Transformer for any Aspect Ratio and Resolution},
 url = {https://proceedings.neurips.cc/paper_files/paper/2023/file/06ea400b9b7cfce6428ec27a371632eb-Paper-Conference.pdf},
 volume = {36},
 year = {2023}
}

@misc{qwen2vl,
      title={{Qwen2-VL}: Enhancing Vision-Language Model's Perception of the World at Any Resolution}, 
      author={Peng Wang and Shuai Bai and Sinan Tan and Shijie Wang and Zhihao Fan and Jinze Bai and Keqin Chen and Xuejing Liu and Jialin Wang and Wenbin Ge and Yang Fan and Kai Dang and Mengfei Du and Xuancheng Ren and Rui Men and Dayiheng Liu and Chang Zhou and Jingren Zhou and Junyang Lin},
      year={2024},
      eprint={2409.12191},
      archivePrefix={arXiv},
      primaryClass={cs.CV},
      url={https://arxiv.org/abs/2409.12191}, 
}

@Article{internvl15,
author="Chen, Zhe
and Wang, Weiyun
and Tian, Hao
and Ye, Shenglong
and Gao, Zhangwei
and Cui, Erfei
and Tong, Wenwen
and Hu, Kongzhi
and Luo, Jiapeng
and Ma, Zheng
and Ma, Ji
and Wang, Jiaqi
and Dong, Xiaoyi
and Yan, Hang
and Guo, Hewei
and He, Conghui
and Shi, Botian
and Jin, Zhenjiang
and Xu, Chao
and Wang, Bin
and Wei, Xingjian
and Li, Wei
and Zhang, Wenjian
and Zhang, Bo
and Cai, Pinlong
and Wen, Licheng
and Yan, Xiangchao
and Dou, Min
and Lu, Lewei
and Zhu, Xizhou
and Lu, Tong
and Lin, Dahua
and Qiao, Yu
and Dai, Jifeng
and Wang, Wenhai",
title="How far are we to {GPT-4V}? {C}losing the gap to commercial multimodal models with open-source suites",
journal="Science China Information Sciences",
year="2024",
month="Dec",
day="13",
volume="67",
number="12",
pages="220101",
issn="1869-1919",
doi="10.1007/s11432-024-4231-5",
url="https://doi.org/10.1007/s11432-024-4231-5"
}

@article{
llavaonevision,
title={{LLaVA-OneVision}: Easy Visual Task Transfer},
author={Bo Li and Yuanhan Zhang and Dong Guo and Renrui Zhang and Feng Li and Hao Zhang and Kaichen Zhang and Peiyuan Zhang and Yanwei Li and Ziwei Liu and Chunyuan Li},
journal={Transactions on Machine Learning Research},
issn={2835-8856},
year={2025},
url={https://openreview.net/forum?id=zKv8qULV6n},
note={}
}

@book{lecam1986, title={Asymptotic Methods in Statistical Decision Theory}, ISBN={9781461249467}, ISSN={0172-7397}, url={http://dx.doi.org/10.1007/978-1-4612-4946-7}, DOI={10.1007/978-1-4612-4946-7}, series={Springer Series in Statistics}, publisher={Springer New York}, author={Le Cam, Lucien}, year={1986} }

@book{tsybakov2009, title={Introduction to Nonparametric Estimation}, ISBN={9780387790527}, ISSN={0172-7397}, url={http://dx.doi.org/10.1007/b13794}, DOI={10.1007/b13794}, series={Springer Series in Statistics}, publisher={Springer New York}, author={Tsybakov, Alexandre B.}, year={2009} }

@article{blackwell1953, title={Equivalent Comparisons of Experiments}, volume={24}, ISSN={0003-4851}, url={http://dx.doi.org/10.1214/aoms/1177729032}, DOI={10.1214/aoms/1177729032}, number={2}, journal={The Annals of Mathematical Statistics}, publisher={Institute of Mathematical Statistics}, author={Blackwell, David}, year={1953}, month=jun, pages={265--272} }

@ARTICLE{howard1966,
  author={Howard, Ronald A.},
  journal={IEEE Transactions on Systems Science and Cybernetics}, 
  title={Information Value Theory}, 
  year={1966},
  volume={2},
  number={1},
  pages={22-26},
  doi={10.1109/TSSC.1966.300074}}

@book{vaidyanathan1993,
author = {Vaidyanathan, P. P.},
title = {Multirate Systems and Filter Banks},
publisher = {Prentice Hall},
address = {Englewood Cliffs, NJ},
year = {1993},
isbn = {978-0-13-605718-5}
}
